\DeclareMathOperator*{\argmax}{arg\,max}
\DeclareMathOperator*{\argmin}{arg\,min}
\newcommand{\fig}{{Fig.}\@\xspace}
\newcommand{\tab}{{Table}\@\xspace}
\newcommand{\eqn}{{Eq.}\@\xspace}
\newcolumntype{L}[1]{>{\raggedright\arraybackslash}p{#1}}
\newcolumntype{C}[1]{>{\centering\arraybackslash}p{#1}}
\newcolumntype{R}[1]{>{\raggedleft\arraybackslash}p{#1}}
\newcommand{\etal}{et al.\@\xspace}
\newcommand{\ie}{{i.e.,}\@\xspace}
\newcommand{\eg}{{e.g.,}\@\xspace}
\newcommand{\etc}{{etc.}\@\xspace}
\begin{document}
\pagestyle{headings}
\mainmatter
\def\ECCVSubNumber{615}  

\title{$n$-Reference Transfer Learning for Saliency Prediction} 

%
\author{Yan~Luo\inst{1}\orcidID{0000-0001-5135-0316} \and
Yongkang~Wong\inst{2}\orcidID{0000-0002-1239-4428} \and
Mohan~S.~Kankanhalli\inst{2}\orcidID{0000-0002-4846-2015} \and
Qi~Zhao\inst{1}\orcidID{0000-0003-3054-8934}}
\authorrunning{Y. Luo et al.}
%
\institute{Department of Computer Science and Engineering, University of Minnesota \\
\email{luoxx648@umn.edu}, \email{qzhao@cs.umn.edu}\\
\and
School of Computing, National University of Singapore \\
\email{\{wongyk, mohan\}@comp.nus.edu.sg}}
\maketitle

\begin{abstract}

Benefiting from deep learning research and large-scale datasets, saliency prediction has achieved significant success in the past decade. 
However, it still remains challenging to predict saliency maps on images in new domains that lack sufficient data for data-hungry models. 
To solve this problem, we propose a few-shot transfer learning paradigm for saliency prediction, which enables efficient transfer of knowledge learned from the existing large-scale saliency datasets to a target domain with limited labeled samples. 
Specifically, few target domain samples are used as the {\it reference} to train a model with a source domain dataset such that the training process can converge to a local minimum in favor of the target domain. 
Then, the learned model is further fine-tuned with the {\it reference}.
The proposed framework is gradient-based and model-agnostic.
We conduct comprehensive experiments and ablation study on various source domain and target domain pairs. 
The results show that the proposed framework achieves a significant performance improvement. 
The code is publicly available at
\url{https://github.com/luoyan407/n-reference}.

\keywords{Deep learning, Saliency prediction, n-shot transfer learning}

\end{abstract}

\section{Introduction}

Saliency prediction is the task that aims to model human attention to predict where people look in the given image. 
Thanks to the power of deep neural networks~\cite{He_CVPR_2016,Krizhevsky_NIPS_2012,Xie_CVPR_2017} (DNNs), state-of-the-art saliency models~\cite{Cornia_TIP_2018,Yang_arXiv_2019} perform very well in predicting human attention on naturalistic images. 
Behind the success of this task, a considerable amount of real-world images and corresponding human fixations fuels the process of training the data-hungry DNNs.

\begin{figure}
	\centering
	\includegraphics[width=1.0\linewidth]{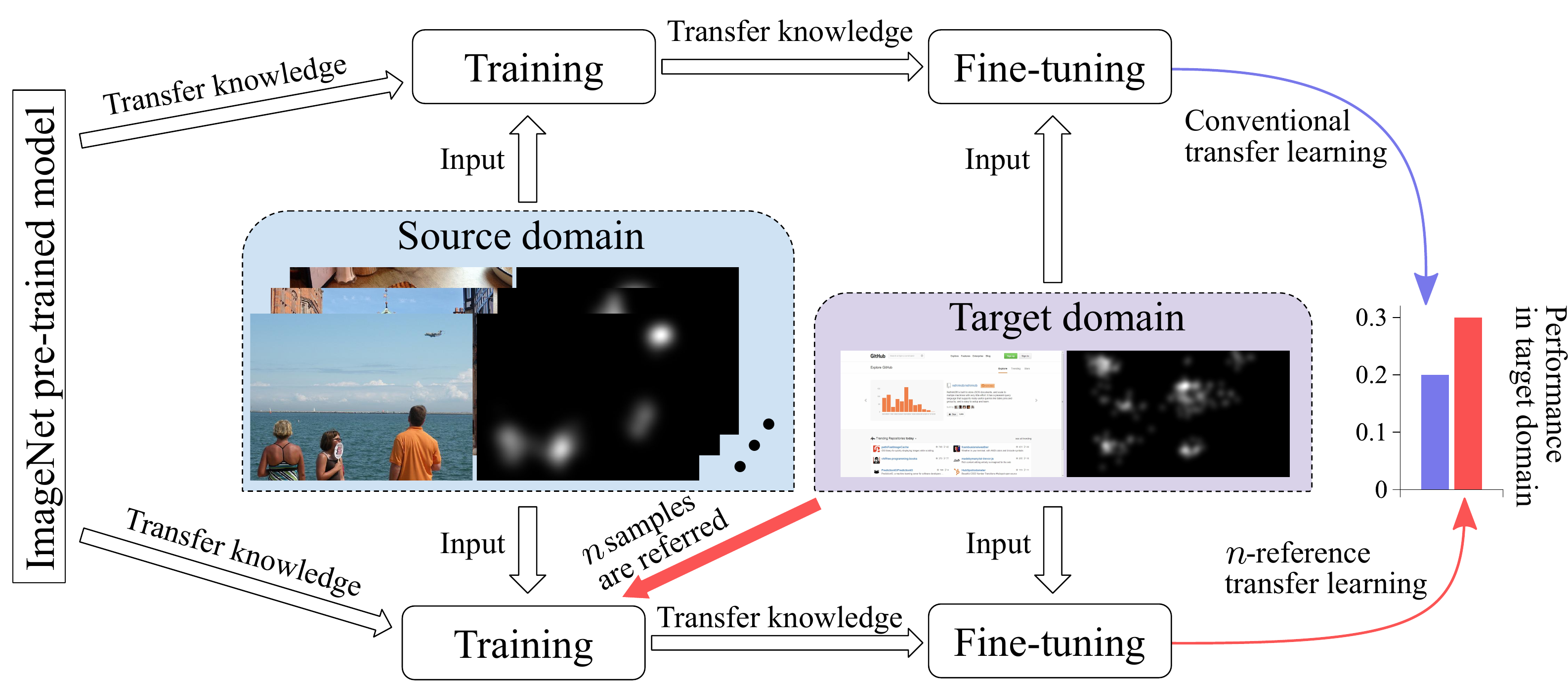}
	\caption{The proposed $n$-reference transfer learning framework for saliency prediction. 
    This framework aims to generate a better initialization with $n$ reference samples from the target domain when training on the source domain, followed by fine-tuning to maximize knowledge transfer. 
    It is based on the widely-used two-stage transfer learning framework (\ie~first training and then fine-tuning) and can easily adapt to other fine-tuning strategies}
	\label{fig:teaser}
\end{figure}

However, it is still difficult to predict saliency maps on images in novel domains, which has insufficient or few data to train saliency models with desired performance. 
As the time/money cost of collecting human fixations is prohibitive \cite{Borji_arXiv_2015,Jiang_CVPR_2015}, a feasible solution is to reuse the existing large-scale saliency datasets along with a few target domain samples to solve this problem. 
Along this line, we study how to transfer the knowledge learned from the existing large-scale saliency datasets to the target domain in a few-shot transfer learning setting.

The necessity of few-shot transfer learning for saliency prediction lies in the nature of the task. 
Based on findings drawn from the behavioral experiments, the way that humans attend to regions is significantly affected by the scene context~\cite{Neider_VR_2006,Torralba_PR_2006,Wolfe_NHB_2017}. 
The scene context is correlated to the image domain~\cite{Shrivastava_ToG_2011}. 
In other words, each image from a specific domain could be representative of the others from the same domain to some degree, \eg~webpage images generally have a similar layout and design~\cite{Shan_ICISBDE_2017}. 
In visual saliency study, existing datasets~\cite{Borji_arXiv_2015,Shen_ECCV_2014} in non-natural images domain are much smaller than the natural image ones~\cite{Jiang_CVPR_2015,Judd_ICCV_2009}. 
Moreover, there are numerous images used in the subfields of medicine, biology, \etc, which may not have any human fixation data yet. 
In this work, we assume that it is feasible and viable to collect human fixations on a small number of images to enable few-shot learning.

Compared to $n$-reference transfer learning for classification task~\cite{Bauml_ICRA_2019}, we focus on how to use very few target domain samples as references to learn a better initial model for fine-tuning. 
Moreover, there exists no such works for saliency prediction task. 
Models designed for classification may not work for saliency prediction.
First, visual samples in existing classification tasks often contain limited visual concepts (\ie~pre-defined object classes), while objects of any class may appear in the images used for saliency prediction. 
In this sense, saliency prediction often handles images with higher diversity than the ones used for classification.
Second, the output of classification models~\cite{Bauml_ICRA_2019,He_CVPR_2016,Krizhevsky_NIPS_2012,Li_CVPR_2019,Sung_CVPR_2018} is a discrete label, while saliency models \cite{Cornia_TIP_2018,Yang_arXiv_2019} output a matrix of real numbers.

In this work,
we follow the widely-used two-stage transfer learning framework \cite{Bauml_ICRA_2019,Guo_CVPR_2019,Shan_ICISBDE_2017}, \ie first training and then fine-tuning, and propose a $n$-reference transfer learning framework. 
Specifically, in the training stage, it aims to use a small number of samples in the target domain as references to guide the knowledge learned from the source domain dataset. 
In this way, the learned model is adapted to the target domain and can be seen as a better initialization than the one trained without the references.
The small number of target domain samples are used as references in both the training stage and as the training data in the fine-tuning stage.
The proposed framework is shown in \fig~\ref{fig:teaser}.

Mathematically, we use cosine similarity between two gradients to facilitate the reference aware model training, 
where the two gradients are respectively computed by samples in the source and target domain.
If the angle between the two gradients is greater than 90 degrees, 
which implies that the directions of the model update are significantly different from each other, 
we optimize the gradient for the update to have smaller differences with the target-domain referenced gradient in cosine similarity. 
The intuition behind is to mimic the process of human learning with the reference sample, \ie~we adaptively learn from new information so that the newly absorbed knowledge will not contradict the observation of the reference samples \cite{Luo_TPAMI_2019,Li_WACV_2020}. 
The proposed framework is gradient-based and it is model-agnostic.

To comprehensively evaluate the proposed framework, we employ SALICON \cite{Jiang_CVPR_2015} and MIT1003~\cite{Judd_ICCV_2009} as the source domain datasets (\ie~the knowledge sets), and WebSal~\cite{Shen_ECCV_2014} and the art subset in CAT2000~\cite{Borji_arXiv_2015} as the target domain data. 
We randomly select 1, 5, or 10 samples from the target domain data as references.
The contributions of this work can be summarized as follows:
\begin{itemize}
	\item To study how humans perceive scenes from a partially explored domain, we propose a model-agnostic few-shot transfer learning paradigm to transfer knowledge from the source domain to the target domain. 
	This is the first work that studies few-shot transfer learning for saliency prediction.
	\item We propose a $n$-reference transfer learning framework to adaptively guide the training process. It guarantees that the knowledge learned with the source domain data would not contradict the references in the target domain, and produce a good initialization for further fine-tuning. The proposed framework is model-agnostic and can generally work with existing saliency models.
	\item Comprehensive experiments show the proposed framework works on various combinations of source domain and target domain pairs. 
	The experiment with various baseline models show that the proposed approach can efficiently transfer the knowledge from the source domain to the target domain.
\end{itemize}
\section{Related Works}
\label{sec:related}

\subsection{Saliency Prediction}

Saliency prediction aims to mimic human vision system to perceive interesting regions in a cluttered visual world. 
Itti~\etal~\cite{Itti_PAMI_1998} develop the first bottom-up stimulus-driven saliency model. 
Since then, many works emerge to interpret visual saliency from various perspectives~\cite{Harel_NIPS_2007,Hou_PAMI_2011,Judd_ICCV_2009,Zhang_CVPR_2013}. 
With the advent of DNNs~\cite{He_CVPR_2016,Krizhevsky_NIPS_2012,Xie_CVPR_2017}, saliency prediction benefitted from data-driven discriminative features instead of relying on hand-crafted features \cite{Cornia_ICPR_2016,Kruthiventi_TIP_2017,Kummerer_ICLR_2015,Pan_arXiv_2017}. 
Recently, Cornia~\etal~\cite{Cornia_TIP_2018} introduce a network that integrates ResNet-50~\cite{He_CVPR_2016} and convolutional LSTMs to better attend to salient regions by iteratively refining the predictions. Yang~\etal~\cite{Yang_arXiv_2019} propose a dilated inception network (DINet) that stacks dilated convolutions with different dilation rates upon ResNet-50 to capture wider spatial information. 
It achieves state-of-the-art performance on various benchmarks. 
A widely-used practice to transfer the knowledge learned from image classification to saliency prediction is by using the weights pre-trained on ImageNet as model initialization~\cite{Cornia_ICPR_2016,Kruthiventi_TIP_2017,Kummerer_ICLR_2015,Pan_arXiv_2017}.
In contrast, this work studies the few-shot cross-domain transfer learning problem, which takes place between two domains.
Without loss of generality, we follow \cite{Luo_TPAMI_2019} to adopt both ResNet-50 and DINet as the baseline models in this work.

\subsection{Few-shot Learning}

Few-shot learning~\cite{Li_PAMI_2006,Lake_AMCSS_2011,Li_CVPR_2019,Sung_CVPR_2018} aims to study how to learn classifiers for unseen visual concepts with only a few samples per class. 
Lake~\etal~\cite{Lake_Science_2015} introduce a Bayesian program learning framework that can learn from one example for predicting character strokes. 
Matching networks~\cite{Vinyals_NIPS_2016} use an attention mechanism that is analogous to a kernel density estimator so that it can learn from a few examples rapidly. 
Sung~\etal~\cite{Sung_CVPR_2018} propose a relation network to learn a transferable deep metric to compare the relation between the small number samples. 
In \cite{Lee_CVPR_2019}, Lee~\etal study how to learn feature embeddings with a few samples that can minimize generalization error across a distribution of tasks. 
As the process of collecting human fixations is prohibitive~\cite{Jiang_CVPR_2015}, 
learning with very few samples is promising for saliency prediction to overcome the need for big data.

\subsection{Transfer Learning}

Transfer learning, a.k.a. domain adaptation or domain transfer, is a paradigm to utilize training data in the source domain to solve the problem in the target domain~\cite{Csurka_Springer_2017,Daume_JAIR_2006,Li_ACMMM_2017,Shan_ICISBDE_2017,Pan_TNN_2010}. 
In general, it can be seen as a two-stage learning framework, \ie first training a model with source domain data and then fine-tuning the pre-trained model with target domain data. 
There are many DNN-based works~\cite{Bauml_ICRA_2019,Bengio_ICMLW_2012,Ge_CVPR_2017,Guo_CVPR_2019,Li_ICML_2018} that use this learning framework for classification tasks. 
Specifically, Guo \etal \cite{Guo_CVPR_2019} study and design a variant of the standard fine-tuning method for better transferability. 
However, it requires many training samples to determine whether it should fine-tune or freeze the parameters in a particular layer. 
Recently, B{\"a}uml and Tulbure~\cite{Bauml_ICRA_2019} introduce a learning framework that transfers the knowledge learned from the source domain to the target domain with a few samples for tactile material classification. 
As saliency prediction is by nature class-agnostic, 
learning to predict human fixations with very few samples (\eg $\le 10$) in the target domain is more challenging than the same paradigm for classification and has not been explored yet.
Different from the aforementioned methods, we propose the first model-agnostic few-shot transfer learning framework for saliency prediction and conduct comprehensive study on multiple combinations of source domain datasets and target domain datasets.
\section{Methodology}
\label{sec:method}

In this section, we first formulate the problem and discuss its theoretical generalization bound. 
Then, we delve into the details of the proposed framework.

\subsection{Problem Statement}

In this work, we denote the images as $I^{S},I^{T} \in \mathbb{R}^{m}$ and the human fixation maps as $y^{S}, y^{T} \in \mathcal{Y}$ ($\mathcal{Y} \equiv [0,1]^{m}\subseteq \mathbb{R}^{m}$), where $m$ is the dimensions of the image and $S$ ($T$) indicates the source (target) domain. In general, given an image $I$, the prediction function $f: \mathbb{R}^{m} \xrightarrow{\theta} \mathcal{Y}$ with parameters $\theta$ will predict $z$ and then the loss function $\ell: \mathcal{Y} \times \mathcal{Y}\rightarrow \mathbb{R}_{+}$ will evaluate the discrepancy between $z$ and $y$. 
Transfer learning for saliency prediction task can be considered as a two-stage learning problem. First, the model's parameters are learned with the source domain data through the training process, \ie
\begin{align}
    \theta_\mathsf{TR} = \argmin_{\theta}\ \frac{1}{|D^{S}|}\sum_{(I_{i},y_{i})\in D^{S}}^{}\ell(f(I_{i};\theta),y_{i})|_{\theta_{0}}
\label{eqn:stage1}
\end{align}
where $D^{S}$ is the source domain dataset, $|D^{S}|$ is the number of the samples, and $\mathsf{TR}$ stands for training. $\theta_{0}$ are the initialized parameters and the model is usually pre-trained on ImageNet \cite{Deng_CVPR_2009}. Then, $\theta_\mathsf{TR}$ is taken as the initialization for further fine-tuning on the target domain data, \ie
\begin{align}
    \theta_\mathsf{FT}^{*} = \argmin_{\theta}\ \frac{1}{|D^{T}|}\sum_{(I_{i},y_{i})\in D^{T}}^{}\ell(f(I_{i};\theta),y_{i})|_{\theta_{0}=\theta_\mathsf{TR}}
\label{eqn:stage2}
\end{align}

In this work, we aim to learn a better initialization by the first stage objective (\ref{eqn:stage1}), which is in favor of the target domain data. Such initialized parameters (\ie $\theta_\mathsf{TR}$) are expected to further achieve better performance by fine-tuning on $D^{T}$. To this end, we introduce a referencing mechanism that allows the training process fed with $D^{S}$ to reference the model update w.r.t. the referenced samples $(I^{R},y^{R}) \in D^{T} (|D^{S}| \gg |D^{T}|)$. Mathematically, this can be formulated as
\begin{align}
\theta_\mathsf{TR-Ref} = \argmin_{\theta}\ \frac{1}{|D^{S}|}\sum_{\substack{(I_{i},y_{i})\in D^{S}\\(I_{j}^{R},y_{j}^{R})\in D^{T}}}^{}\ell(f_\mathsf{Ref}(I_{i};\theta,(I_{j}^{R},y_{j}^{R})),y_{i})|_{\theta_{0}}
\label{eqn:stage1_ref}
\end{align}
where $\mathsf{TR\!-\!Ref}$ indicates the training process references target domain samples when updating the model.
$f_\mathsf{Ref}$ is a variant of $f$ which has the same forward propagation as $f$ but has more complicated backward propagation. 
$\theta_\mathsf{TR-Ref}$ is taken as the initialization in the second stage objective (\ref{eqn:stage2}) for further fine-tuning. We denote the resulting parameters as $\theta_\mathsf{FT|Ref}$. 

\subsection{Generalization Bound of Saliency Prediction}

Here, we discuss the theoretical guarantee of saliency prediction.
Following the setting used in \cite{Mohri_MIT_2012}, given training data ${(I_{1}, y_{1}),(I_{2}, y_{2}),\ldots} \in \mathcal{X} \times \mathcal{Y}$, where $\mathcal{Y}\in [0,1]^{m}\subseteq \mathbb{R}^{m}$, we use the $L^p$ loss, \ie $\ell^{p}: \mathcal{Y} \times \mathcal{Y} \rightarrow \mathbb{R}_{+}, p\ge1$. The prediction function $f(\cdot;\theta)$ is denoted as $f(\cdot)$ for simplicity. $I$ is drawn i.i.d. according to the unknown distribution $\mathcal{D}$ and $y=f^{*}(I)$ where $f^*$ is the target labeling function. Saliency prediction can be considered as a mathematical problem that finds hypothesis $f: \mathbb{R}^{m}\rightarrow [0,1]^{m}$ in a set $H$ with small generalization error w.r.t. $f^{*}$,
\begin{align*}
R_{\mathcal{D}}(f)=E_{I\sim \mathcal{D}}[\ell(f(I), f^{*}(I))].
\end{align*}
In practice, as $\mathcal{D}$ is unknown, we use empirical error for approximation, \ie
\begin{align*}
\hat{R}_{\mathcal{D}}(f)=\frac{1}{|D|}\sum_{i=1}^{|D|}\ell(f(I_{i}), y_{i}),
\end{align*}
where $|D|$ is the sample number in dataset $D$ for training.

We introduce the generalization bound of saliency prediction as follows.
The proof is provided in the supplementary document.
\begin{theorem}[Saliency generalization bound]
	Denote $H$ as a finite hypothesis set. Given $\ell^{p}$ and $y\in [0,1]^{m}$, for any $\delta>0$, with probability at least $1-\delta$, the following inequality holds for all $f\in H$:
	\begin{align*}
	|R_{\mathcal{D}}(f) - \hat{R}_{\mathcal{D}}(f)| \le  m^{\frac{1}{p}}\sqrt{\frac{\log|H|+\log\frac{2}{\delta}}{2|D|}}
	\end{align*}
	\label{thrm:gb}
\end{theorem}
\begin{remark}
	Theorem~\ref{thrm:gb} shows how the training set scale influences the generalization bound. When $|D|$ tends towards infinity, $R_{\mathcal{D}}(f) \equiv \hat{R}_{\mathcal{D}}(f)$. This conforms to the general intuition that it can train a more general model with more data. Contrarily, when $|D|=1$, it leads to the largest bound for $|R_{\mathcal{D}}(f) - \hat{R}_{\mathcal{D}}(f)|$. Moreover, it demonstrates the task is challenging with small number of samples.
\end{remark}

\begin{figure}[!t]
	\centering
	\includegraphics[width=0.96\textwidth]{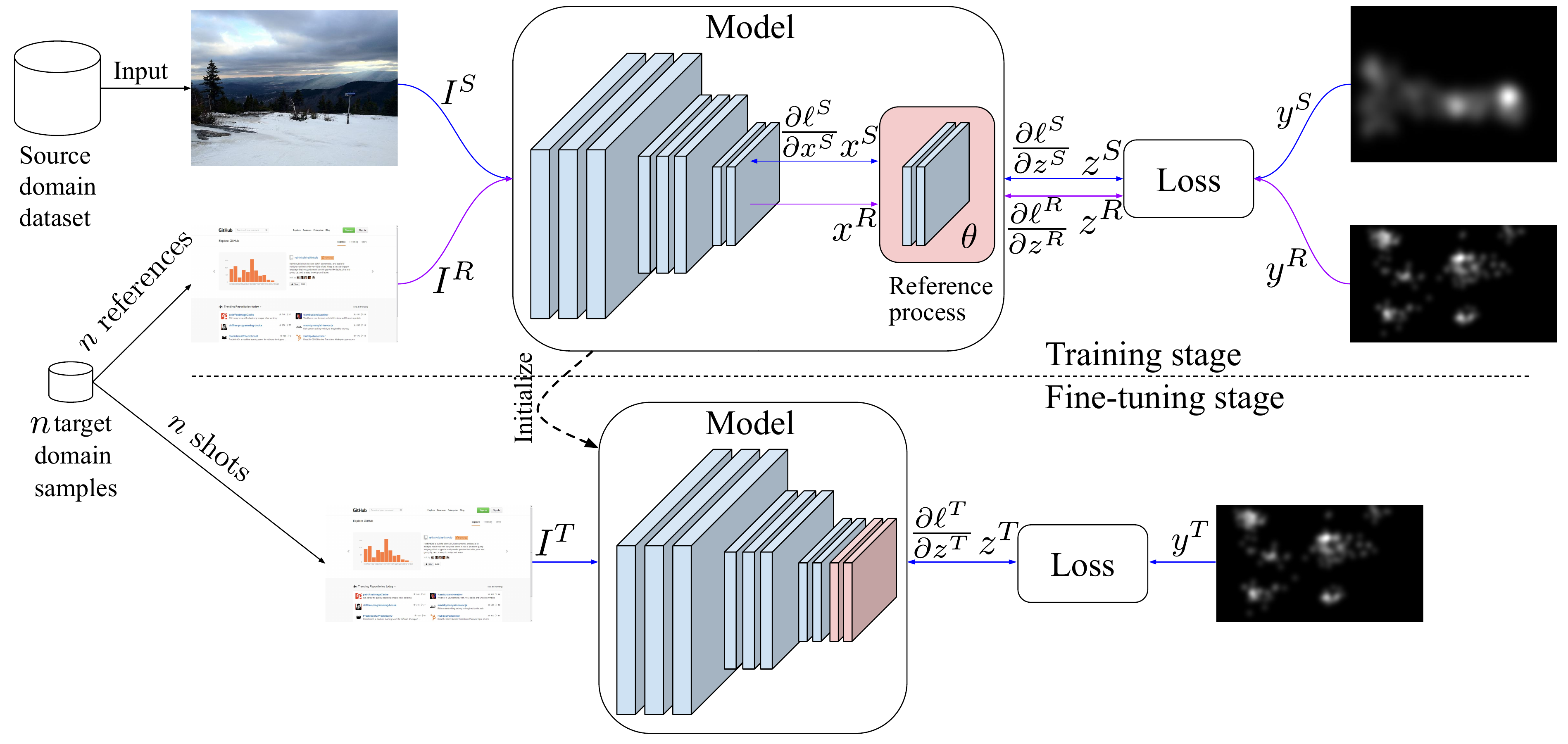}
	\caption{Proposed $n$-reference transfer learning framework. Note that we assume that only very few samples from the target domain are available, \ie~$n\le 10$}
	\label{fig:framework}
\end{figure}

\subsection{Overall Framework}

In this subsection, we introduce the few-shot transfer learning framework that solves the objective function (\ref{eqn:stage2}) and (\ref{eqn:stage1_ref}).
The overall workflow of the proposed $n$-reference transfer learning framework is shown in \fig~\ref{fig:framework}.

Similar to classification model~\cite{He_CVPR_2016,Huang_CVPR_2017,Xie_CVPR_2017}, state-of-the-art saliency models tend to be large. For example, DINet~\cite{Yang_arXiv_2019} and SAM-ResNet-50~\cite{Cornia_TIP_2018} consist of 26M and 70M parameters, respectively. Therefore, instead of inefficiently applying the proposed framework to the whole saliency model, we only apply it to a few downstream layers which are close to the output. The downstream layers produce discriminative features used for prediction with a small number of parameters, and it makes the transfer learning process more cost-effective. Consequently, we split the model into two parts, \ie the model body $\theta_\mathsf{body}$ and the model head $\theta_\mathsf{head}$. This split would be only effective in the training stage and the two parts will be integrated again as they always are in the inference stage. Note that the split is flexible. The effective scope of the proposed framework could cover the whole model and the model body would correspondingly turn to be an empty set. As we only focus on $\theta_\mathsf{head}$, we simplify it as $\theta$ in the following text. 

In the forward propagation, as the training image $I^{S} \in D^{S}$ and the reference image $I^{R} \in D_{T}$ are fed to the model body, the discriminative feature $x^{S}$ and $x^{R}$ are generated, respectively. Then, the model head would take $x^{S}$ and $x^{R}$ as input to produce prediction $z^{S}$ and $z^{R}$, respectively. Specifically, $z^{S} = f(x^{S};\theta)$. A similar process applies to $z^{R}$. The loss function is used to compute the distance between $z^{S}$ and $y^{S}$ (and between $y^{R}$ and $y^{R}$ as well). In the backward propagation, two gradients are computed by the chain rule
	\begin{align*}
	\frac{\partial \ell^{S}}{\partial \theta} = \frac{\partial \ell(f(x^{S};\theta), y^{S})}{\partial z^{S}} \frac{\partial z^{S}}{\partial \theta}, \hspace{4ex} \frac{\partial \ell^{R}}{\partial \theta} = \frac{\partial \ell(f(x^{R};\theta), y^{R})}{\partial z^{R}} \frac{\partial z^{R}}{\partial \theta}.
	\end{align*}
Specifically, $\frac{\partial \ell^{S}}{\partial \theta}$ indicates the model update towards a local minimum $\theta^{*(S)}$ which is learned from the samples from $D^{S}$, while $\frac{\partial \ell^{R}}{\partial \theta}$ indicates the model update towards a local minimum $\theta^{*(T)}$ which is learned from the samples from $D^{T}$. 

As shown in \fig~\ref{fig:framework}, $\theta_\mathsf{head}$ are updated by the proposed reference process and $\theta_\mathsf{body}$ are updated with the standard gradients in the training stage. 
During fine-tuning, $\theta_\mathsf{head}$ and $\theta_\mathsf{body}$ are updated with the standard gradients.

\begin{figure}[!t]
	\centering
	\includegraphics[width=0.85\linewidth]{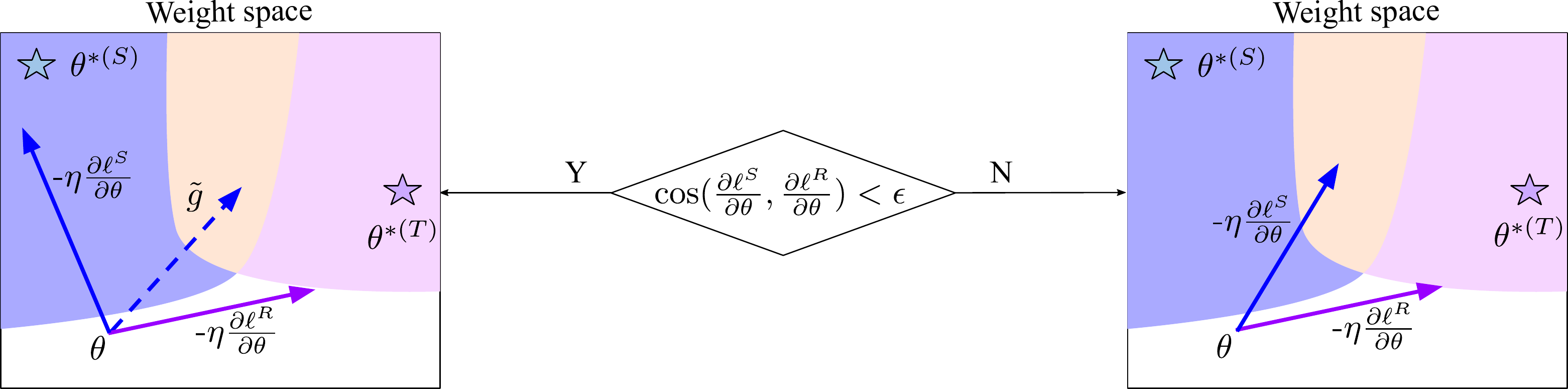}
	\caption{The reference process computing the gradient that better adapts to the target domain data. $\theta^{*(S)}$ is a local minimum trained by sufficient source domain samples, while $\theta^{*(T)}$ is a local minimum trained by sufficient target domain samples. Given a pre-defined threshold $\epsilon$, if the cosine similarity between the gradient ($\frac{\partial \ell^{S}}{\partial \theta}$) generated by the source sample and the gradient ($\frac{\partial \ell^{R}}{\partial \theta}$) generated by the reference sample is smaller than $\epsilon$, it will compute a corrected gradient by optimizing the cosine similarity. It retains $\frac{\partial \ell^{S}}{\partial \theta}$ otherwise}
	\label{fig:reference}
\end{figure}

\subsection{Reference Process}

Here, we delve into the formulation of the proposed reference process (\fig~\ref{fig:reference}).
The cosine similarity between $\frac{\partial \ell^{S}}{\partial \theta}$ and $\frac{\partial \ell^{R}}{\partial \theta}$ can evaluate the difference of the two gradients. Accordingly, we pre-define a threshold $\epsilon$ to determine if the difference is considered as minor and the update with $\frac{\partial \ell^{S}}{\partial \theta}$ will be close to both $\theta^{*(S)}$ and $\theta^{*(T)}$. If the difference is significant, the proposed reference process will adjust $\frac{\partial \ell^{S}}{\partial \theta}$ so that it will move more towards $\theta^{*(T)}$. This process is defined as follows
\begin{align}
    \tilde{g}=\begin{cases} \argmax_{g} \cos(g, \frac{\partial \ell^{R}}{\partial \theta})-\lambda \|g\|^{2}_{2}|_{g_{0}=\frac{\partial \ell^{S}}{\partial \theta}} & \text{if} \cos(\frac{\partial \ell^{S}}{\partial \theta}, \frac{\partial \ell^{R}}{\partial \theta}) < \epsilon, \\
    \frac{\partial \ell^{S}}{\partial \theta} & \text{otherwise},
              \end{cases}
\label{eqn:refer}
\end{align}
where $\lambda$ is the regularization parameter and $\cos(\cdot, \cdot)$ is the cosine similarity, \ie $\cos(a,b)=a^{\top}b/|a||b|$ ($a$ and $b$ are the input vectors), and $\tilde{g}$ is the output gradient. The embedded optimization problem in \eqn (\ref{eqn:refer}) aims to find a $\tilde{g}$, which is with an initial point $g_{0}=\frac{\partial \ell^{S}}{\partial \theta}$, to be consistent with the reference gradient $\frac{\partial \ell^{R}}{\partial \theta}$ in terms of cosine similarity. In other words, the reference gradient $\frac{\partial \ell^{R}}{\partial \theta}$ provides a reference so that $\tilde{g}$ is able to be aware of a rough direction towards the underlying $\theta^{*(T)}$. In this way, the knowledge learned from $D^{S}$ is transferred to the target domain.
We solve the embedded optimization problem with the gradient ascent method because our goal is to maximize the cosine similarity between $\tilde{g}$ and $\frac{\partial \ell^{R}}{\partial \theta}$.
Subsequently, $\theta$ would be updated with $\tilde{g}$, \ie $\theta \leftarrow \theta - \eta\tilde{g}$, where $\eta$ is a learning rate.
Note that $\frac{\partial \ell^{S}}{\partial \theta}$ is generated by randomly selected training samples and is the initial point for $\tilde{g}$. As a result, the process of optimizing cosine similarity in the training stage is almost surely stochastic. This can effectively prevent $\tilde{g}$ from overfitting $\frac{\partial \ell^{R}}{\partial \theta}$.

The proposed reference process yields $\tilde{g}$ to update the model so that the parameters are close to the underlying $\theta^{*(T)}$. As $\theta$ is learned with the references from the target domain, by the chain rule, $\frac{\partial \ell^{S}}{\partial \theta_{body}}=\frac{\partial \ell^{S}}{\partial x^{S}} \frac{\partial x^{S}}{\partial \theta_{body}}$ and $\frac{\partial \ell^{S}}{\partial x^{S}}$ can be considered as a function of $\theta$. So $\theta_\mathsf{b}$ will be affected by the references as well.

As the number of references is expected to be far smaller than the training data, we follow a similar idea of the stochastic process to randomly draw a reference from the reference pool at each iteration. 
\section{Experiments}
\label{sec:experiment}

In this section, we introduce the experimental protocol, present the experimental results, and then have a discussion about the results.

\subsection{Experimental Setup}
\label{subsec:setup}

\subsubsection{Datasets.}

\noindent We adopt the large-scale saliency prediction dataset SALICON~\cite{Jiang_CVPR_2015} (the 2017 version) and the MIT1003 \cite{Judd_ICCV_2009} as the source domain datasets. 
Accordingly, we adopt WebSal~\cite{Shen_ECCV_2014} and the art subset in CAT2000~\cite{Borji_arXiv_2015} as the target domain datasets. 
Specifically, SALICON consists of 10000 real-world images, MIT1003 consists of 1003 natural scene images, and WebSal consists of 149 webpage screenshots.
CAT2000 includes 20 categories and each category has 100 images. Art is one of the most common categories, whose images are the pictures of human-made works, like the paintings, handcrafts, and \etc

\subsubsection{Baseline Models.}

To study how well the proposed method would generalize to different models, we use two baseline models, \ie~DINet~\cite{Yang_TMM_2012} and ResNet-50~\cite{He_CVPR_2016}.

\subsubsection{Settings.}

There are three dimensions to the experiments in this work, \ie~source domain samples, baseline model, and target domain samples. 
Specifically, the baseline model is trained with the source domain samples. 
The learned model is further fine-tuned with the target domain samples. This setting is similar in the case of the proposed method. 
For convenience, we denote the setting as a combination of the initials of the datasets or the models, \eg~$\langle \mathsf{S}, \mathsf{D}, \mathsf{W} \rangle$ indicates that we use SALICON as the source domain dataset, DINet as the baseline model, and WebSal as the target domain dataset. 
Similarly, we use initials $\mathsf{M}$, $\mathsf{R}$, and $\mathsf{A}$ to represent MIT1003, ResNet-50, and Art, respectively.

To understand how the number of references affects the performance, we evaluate the proposed method with $n={1,5,10}$. 
Moreover, to provide a benchmark of the performance w.r.t. more references, a paradigm that is similar to 3-fold cross validation is applied with more references. 
For instance, given WebSal as the target domain datasets, we divide it into three subsets, which contain 50, 50, and 49 images, respectively. Then, we alternately use any two subsets as the reference samples and the rest as the validation set. The process is repeated 3 times. We denote the results of this process as an empirical upper bound.

\subsubsection{Evaluation Metrics.}

We adopt the common metrics used in \cite{Bylinskii_PAMI_2018} and \cite{Jiang_CVPR_2015}, \ie normalized scanpath saliency (NSS)~\cite{Itti_ASNN_2003,Rothenstein_IVC_2008}, area under curve (AUC)~\cite{Borji_TIP_2013,Judd_Report_2012}, and correlation coefficient (CC)~\cite{Ouerhani_ELCVIAs_2004}. Higher scores indicate better performance. We use the public implementation\footnote{\url{https://github.com/NUS-VIP/salicon-evaluation}} provided by \cite{Jiang_CVPR_2015}.
Each experiment is repeated 10 times and the mean metric scores are reported. Due to the space limits, we report the corresponding standard deviation in the supplementary document.

\subsection{Training Scheme}
\label{subsec:exp}

We follow the widely-used two-stage transfer learning framework~\cite{Bauml_ICRA_2019,Guo_CVPR_2019,Shan_ICISBDE_2017,Li_ICML_2018}, \ie first train a model with the source domain data and fine-tune with the target domain data. 
We denote the trained model as $\mathsf{TR}$ and the fine-tuned model as $\mathsf{FT}$. 
In the proposed framework, 
the $n$-reference training stage first trains a model with the source domain data and $n$ target domain references (denoted as $\mathsf{TR\!-\!Ref}$), 
and then further fine-tune with the references (denoted as $\mathsf{FT|Ref}$). 

Regarding the experimental details, we follow DINet~\cite{Yang_arXiv_2019} to use Adam optimizer \cite{Kingma_arXiv_2014} with learning rate $\eta$ = 5e-5 and weight decay 1e-4. 
We use batch size 10 for all the experiments. The number of epochs is 10 and we decrease the learning rate for every 3 epochs by multiplying with 0.2. 
In $\mathsf{TR\!-\!Ref}$, we randomly sample 10 training data without replacement as the training sample at each iteration. 
Meanwhile, we randomly sample $n_{r}$ references with replacement as the reference. 
In this way, the difference between the number of training samples and references will not cause a problem. $n_{r}$ are 1, 3 and 5 in the experiments with $n=1,5,10$, respectively.
This process is the same for the one of $\mathsf{FT}$. 
We select the model with the best performance over epochs for further fine-tuning. The normalized $l_{1}$ loss~\cite{Yang_arXiv_2019} is used and the threshold $\epsilon$ is set to 0 for all the experiments.
We implement the proposed framework with PyTorch~\cite{Paszke_NIPSW_2017}. 

\setlength{\tabcolsep}{4pt}
\begin{table}[!t]
	\begin{center}
		\caption{
			Performance with various settings of $\langle \mathsf{source}, \mathsf{model}, \mathsf{target} \rangle$.
			Here, $\mathsf{S}$ is SALICON, $\mathsf{M}$ is MIT1003, $\mathsf{W}$ is WebSal, $\mathsf{A}$ is Art subset, $\mathsf{D}$ is DINet, and $\mathsf{R}$ is ResNet.
		    $\uparrow$ implies that a higher score is better. 
		    The score in bold font indicates the best result under the respective metric. 
		    We report the mean score from 10 runs for conventional training (\ie~$n=0$) and the proposed method. 
			The empirical upper bound (EUB) is generated by 3-fold cross validation on the target domain. 
			The experimental details are provided in Section~\ref{subsec:setup} and \ref{subsec:exp}
		}
		\label{tbl:s_d_w}
		\begin{tabular}{lccccccc}
			\hline\noalign{\smallskip}
			&  & \multicolumn{3}{c}{$\langle \mathsf{S}, \mathsf{D}, \mathsf{W} \rangle$} & \multicolumn{3}{c}{$\langle \mathsf{S}, \mathsf{R}, \mathsf{W} \rangle$} \\ 	 \cmidrule(lr){3-5} \cmidrule(lr){6-8}
			& & NSS$\uparrow$ & AUC$\uparrow$ & CC$\uparrow$ & NSS$\uparrow$ & AUC$\uparrow$ & CC$\uparrow$ \\ \cmidrule(lr){3-5} \cmidrule(lr){6-8}
			$\mathsf{FT}$ w/o $\mathsf{TR}$~~ & ~$n=10$         & 0.8252 & 0.7430 & 0.3635 & 0.8846 & 0.7455 & 0.3852 \\ \hline\noalign{\smallskip}
			$\mathsf{TR}$          & ~~~~$n=0$~~~~         & 1.3330 & 0.7796 & 0.5515 & 1.2950 & 0.7749 & 0.5358 \\ \hline\noalign{\smallskip}
			$\mathsf{TR\!-\!Ref}$  & $n=1$    & 1.3621 & 0.7848 & 0.5628 & 1.3569 & 0.7864 & 0.5611 \\ 
			$\mathsf{FT}$          & $n=1$    & 1.4731 & 0.8005 & 0.5976 & 1.3722 & 0.7923 & 0.5627 \\
			$\mathsf{FT|Ref}$  & $n=1$        & \textbf{1.5077} & \textbf{0.8051} & \textbf{0.6121} & \textbf{1.4272} & \textbf{0.7983} & \textbf{0.5817} \\ \hline\noalign{\smallskip}
			$\mathsf{TR\!-\!Ref}$  & $n=5$    & 1.3683 & 0.7874 & 0.5659 & 1.3535 & 0.7837 & 0.5593 \\
			$\mathsf{FT}$          & $n=5$    & 1.5803 & 0.8161 & 0.6355 & 1.5043 & 0.8131 & 0.6139 \\
			$\mathsf{FT|Ref}$      & $n=5$    & \textbf{1.6085} & \textbf{0.8200} & \textbf{0.6468} & \textbf{1.5491} & \textbf{0.8149} & \textbf{0.6281} \\ \hline\noalign{\smallskip}
			$\mathsf{TR\!-\!Ref}$  & ~$n=10$  & 1.3647 & 0.7839 & 0.5633 & 1.3583 & 0.7857 & 0.5612 \\ 
			$\mathsf{FT}$          & ~$n=10$  & 1.6290 & 0.8247 & 0.6531 & 1.5164 & 0.8103 & 0.6200 \\
			$\mathsf{FT|Ref}$      & ~$n=10$  & \textbf{1.6439} & \textbf{0.8276} & \textbf{0.6605} & \textbf{1.5829} & \textbf{0.8143} & \textbf{0.6414} \\ \hline\noalign{\smallskip}
			$\mathsf{TR\!-\!Ref}$  & EUB      & 1.3822 & 0.7910 & 0.5708 & 1.3626 & 0.7864 & 0.5645 \\
			$\mathsf{FT}$          & EUB      & 1.8695 & 0.8488 & 0.7389 & 1.8325 & 0.8462 & 0.7275 \\
			$\mathsf{FT|Ref}$      & EUB      & \textbf{1.8831} & \textbf{0.8494} & \textbf{0.7442} & \textbf{1.8500} & \textbf{0.8480} & \textbf{0.7321} \\
			\hline\noalign{\smallskip}
			& & \multicolumn{3}{c}{$\langle \mathsf{M}, \mathsf{D}, \mathsf{W} \rangle$} & \multicolumn{3}{c}{$\langle \mathsf{S}, \mathsf{D}, \mathsf{A} \rangle$} \\ 
			\cmidrule(lr){3-5} \cmidrule(lr){6-8}
			& & NSS$\uparrow$ & AUC$\uparrow$ & CC$\uparrow$ & NSS$\uparrow$ & AUC$\uparrow$ & CC$\uparrow$ \\
			\cmidrule(lr){3-5} \cmidrule(lr){6-8}
			$\mathsf{FT}$ w/o $\mathsf{TR}$         & ~$n=10$         & 0.8252 & 0.7430 & 0.3635 & 1.2183 & 0.8339 & 0.5161 \\ \hline\noalign{\smallskip}
			$\mathsf{TR}$          & $n=0$         & 1.3905 & 0.7991 & 0.5700 & 1.5172 & 0.8225 & 0.6003 \\ \hline\noalign{\smallskip}
			$\mathsf{TR\!-\!Ref}$  & $n=1$         & 1.4405 & 0.8085 & 0.5902 & 1.5651 & 0.8287 & 0.6211 \\
			$\mathsf{FT}$          & $n=1$         & 1.4410 & 0.8023 & 0.5784 & 1.6255 & 0.8324 & 0.6449 \\
			$\mathsf{FT|Ref}$~~~~  & $n=1$         & \textbf{1.4575} & \textbf{0.8070} & \textbf{0.5838} & \textbf{1.6523} & \textbf{0.8380} & \textbf{0.6564} \\
			\hline\noalign{\smallskip} 
			$\mathsf{TR\!-\!Ref}$  & $n=5$         & 1.4452 & 0.8064 & 0.5908 & 1.5870 & 0.8304 & 0.6274 \\
			$\mathsf{FT}$          & $n=5$         & 1.5795 & 0.8217 & 0.6395 & 1.8049 & 0.8480 & 0.7185 \\
			$\mathsf{FT|Ref}$~~~~  & $n=5$         & \textbf{1.6136} & \textbf{0.8269} & \textbf{0.6515} & \textbf{1.8314} & \textbf{0.8503} & \textbf{0.7274} \\
			\hline\noalign{\smallskip}
			$\mathsf{TR\!-\!Ref}$  & ~$n=10$         & 1.4330 & 0.8060 & 0.5872 & 1.5704 & 0.8288 & 0.6204 \\
			$\mathsf{FT}$          & ~$n=10$         & 1.6462 & 0.8261 & 0.6660 & 1.8325 & 0.8474 & 0.7288 \\
			$\mathsf{FT|Ref}$~~~~  & ~$n=10$         & \textbf{1.6691} & \textbf{0.8283} & \textbf{0.6730} & \textbf{1.8584} & \textbf{0.8503} & \textbf{0.7366} \\
			\hline\noalign{\smallskip}
			$\mathsf{TR\!-\!Ref}$  & EUB         & 1.4402 & 0.8087 & 0.5905 & 1.5980 & 0.8340 & 0.6331 \\
			$\mathsf{FT}$          & EUB         & 1.8450 & 0.8466 & 0.7330 & 2.1595 & 0.8636 & 0.8464 \\
			$\mathsf{FT|Ref}$~~~~  & EUB         & \textbf{1.8507} & \textbf{0.8478} & \textbf{0.7344} & \textbf{2.1874} & \textbf{0.8649} & \textbf{0.8519} \\
			\hline
		\end{tabular}
	\end{center}
\end{table}
\setlength{\tabcolsep}{1.4pt}

\subsection{Performance}

The experimental results with the following settings, \ie~$\langle \mathsf{S}, \mathsf{D}, \mathsf{W} \rangle$, $\langle \mathsf{S}, \mathsf{R}, \mathsf{W} \rangle$, $\langle \mathsf{M}, \mathsf{D}, \mathsf{W} \rangle$, and $\langle \mathsf{S}, \mathsf{D}, \mathsf{A} \rangle$, are shown in \tab~\ref{tbl:s_d_w}.
Within setting $\langle \mathsf{S}, \mathsf{D}, \mathsf{W} \rangle$, the proposed framework (\ie $\mathsf{FT|Ref}$) achieves better performance than $\mathsf{FT}$ over all metrics. 
Particularly, as the number of references increases, the consequently trained models provide better initializations for fine-tuning. 
In other words, $\mathsf{FT|Ref}$ yields better performance when the dependent trained model uses more reference samples.
Using a different baseline model,
we experiment it with setting $\langle \mathsf{S}, \mathsf{R}, \mathsf{W} \rangle$ which $\mathsf{FT|Ref}$ achieves consistent improvement. 
Moreover, using DINet as the baseline model leads to better performance than using ResNet-50. 

We study how well the proposed framework generalizes to different target domain data using setting $\langle \mathsf{S}, \mathsf{D}, \mathsf{A} \rangle$. 
As seen in \tab~\ref{tbl:s_d_w}, similar performance improvement can be found, which implies the proposed framework can generalize to a different target domain.
Furthermore, the study with MIT1003 as the source domain dataset, \ie~setting $\langle \mathsf{M}, \mathsf{D}, \mathsf{W} \rangle$, shows consistent improvement.
The overall performance within setting $\langle \mathsf{M}, \mathsf{D}, \mathsf{W} \rangle$ is slightly lower than the one within setting $\langle \mathsf{S}, \mathsf{D}, \mathsf{W} \rangle$. 
This implies that SALICON is more efficient than MIT1003 to transfer the knowledge to WebSal.
On the other hand, models trained with one sample in target domain have noticeable gaps w.r.t. EUB, and is improved with more training samples.
This is consistent with the implication of Theorem~\ref{thrm:gb}.

We perform paired t-test and permutation test over images within setting $\langle \mathsf{S}, \mathsf{D}, \mathsf{W} \rangle$ to evaluate the difference between $\mathsf{TR\!-\!Ref}$ and $\mathsf{FT|Ref}$. 
Both corresponding $p$ are less than 0.001. 
This implies that $\mathsf{TR\!-\!Ref}$ significantly provides a good initialization to $\mathsf{FT|Ref}$ to yield high performance.
To validate the effect of knowledge transfer in saliency prediction, 
we conduct the experiment where models are learned using only the target domain samples, \ie~$\mathsf{FT}$ w/o $\mathsf{TR}$ in \tab~\ref{tbl:s_d_w}. 
We set $n=10$ as $n=1,5$ will yield much worse performance. 
In  all settings, the performance of $\mathsf{FT}$ w/o $\mathsf{TR}$ significantly drops when compare to $\mathsf{FT|Ref}$.
These results are even lower than $\mathsf{TR}$ and $\mathsf{FT}$,
which indicate the importance of efficient initialization with a source domain dataset.
\section{Analysis}

We study the influences of the number of references, the threshold $\epsilon$, and the layers updated by the proposed framework. 
All analysis are within setting $\langle \mathsf{S}, \mathsf{D}, \mathsf{W} \rangle$, where the mean score and standard deviation from 3 runs are reported.

\begin{figure}[!t]
	\centering
	\subfloat[] {\includegraphics[width=0.35\linewidth]{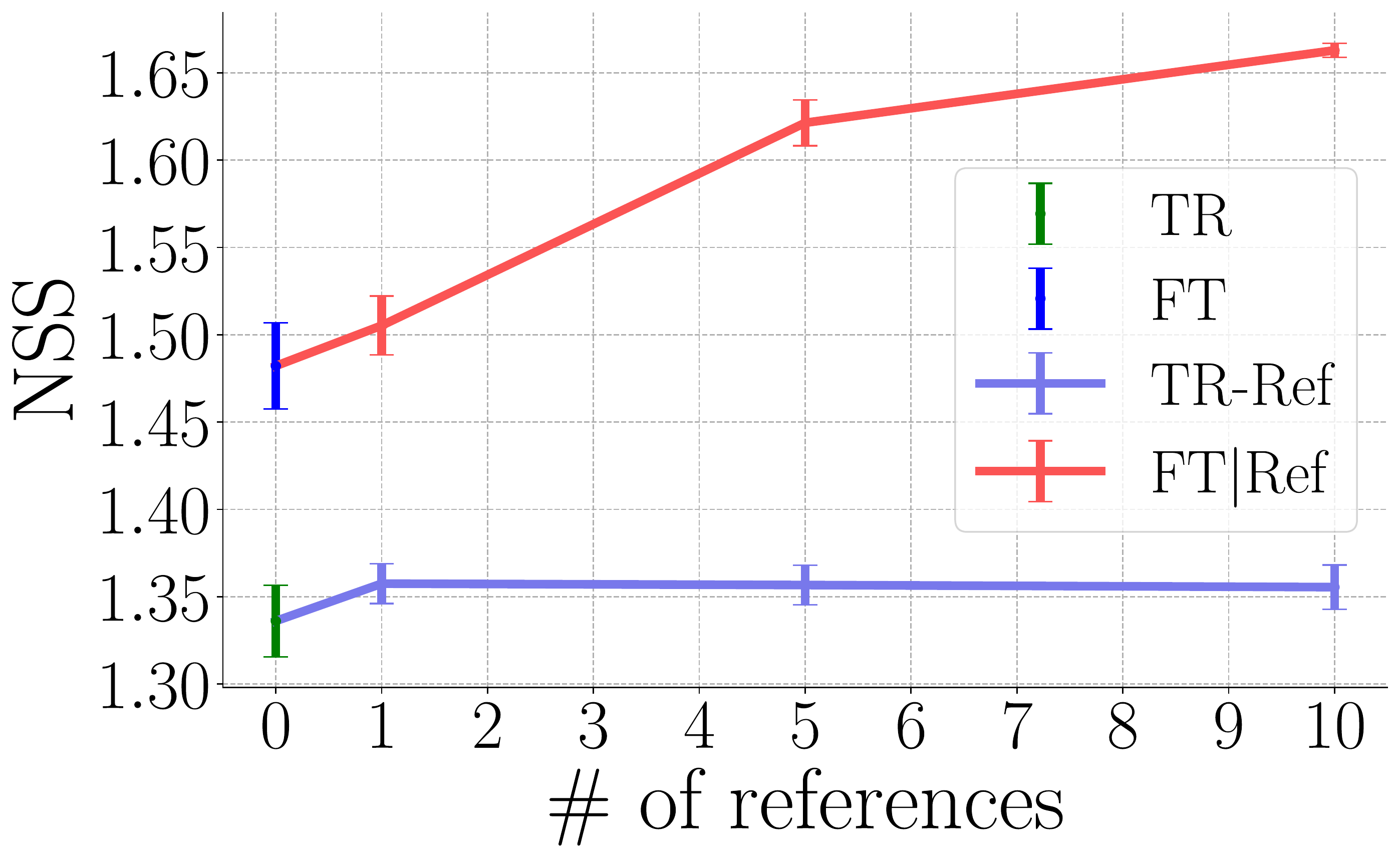} \label{fig:abla_nss_shots}} \hspace{7ex}
	\subfloat[] {\includegraphics[width=0.35\linewidth]{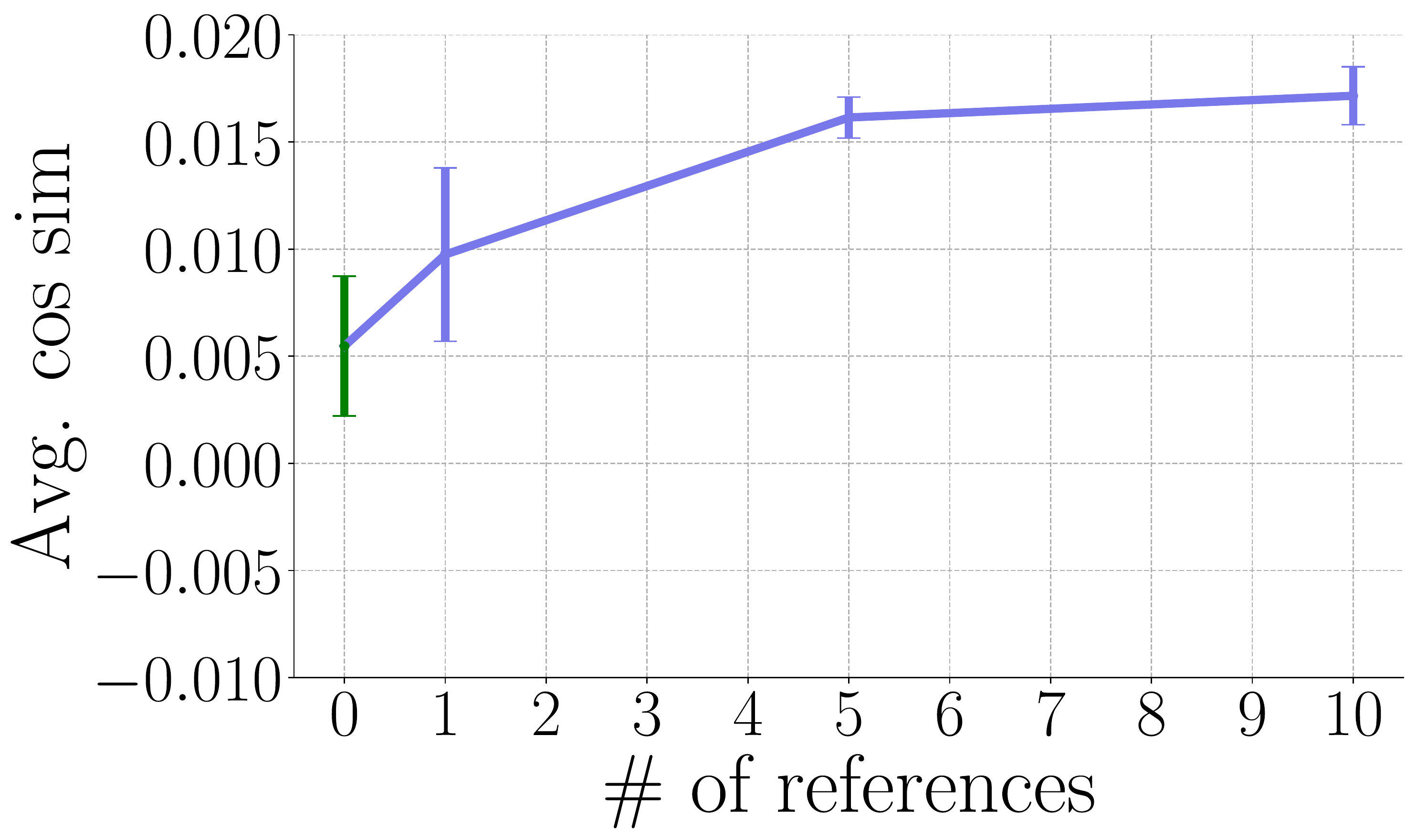} \label{fig:abla_cos_shots}} \\
	\subfloat[] {\includegraphics[width=0.35\linewidth]{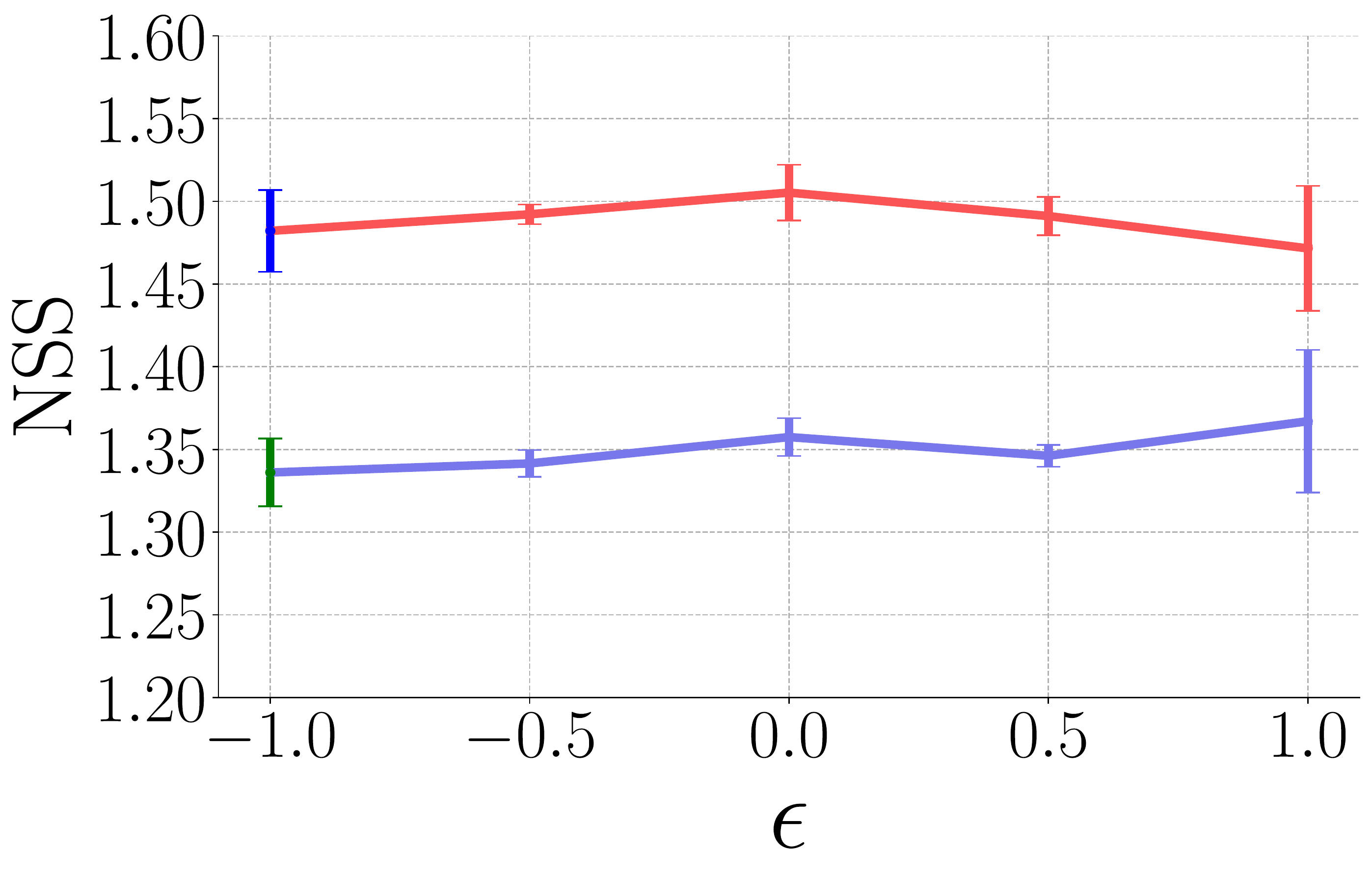}    \label{fig:abla_nss_th}}   \hspace{7ex}
	\subfloat[] {\includegraphics[width=0.35\linewidth]{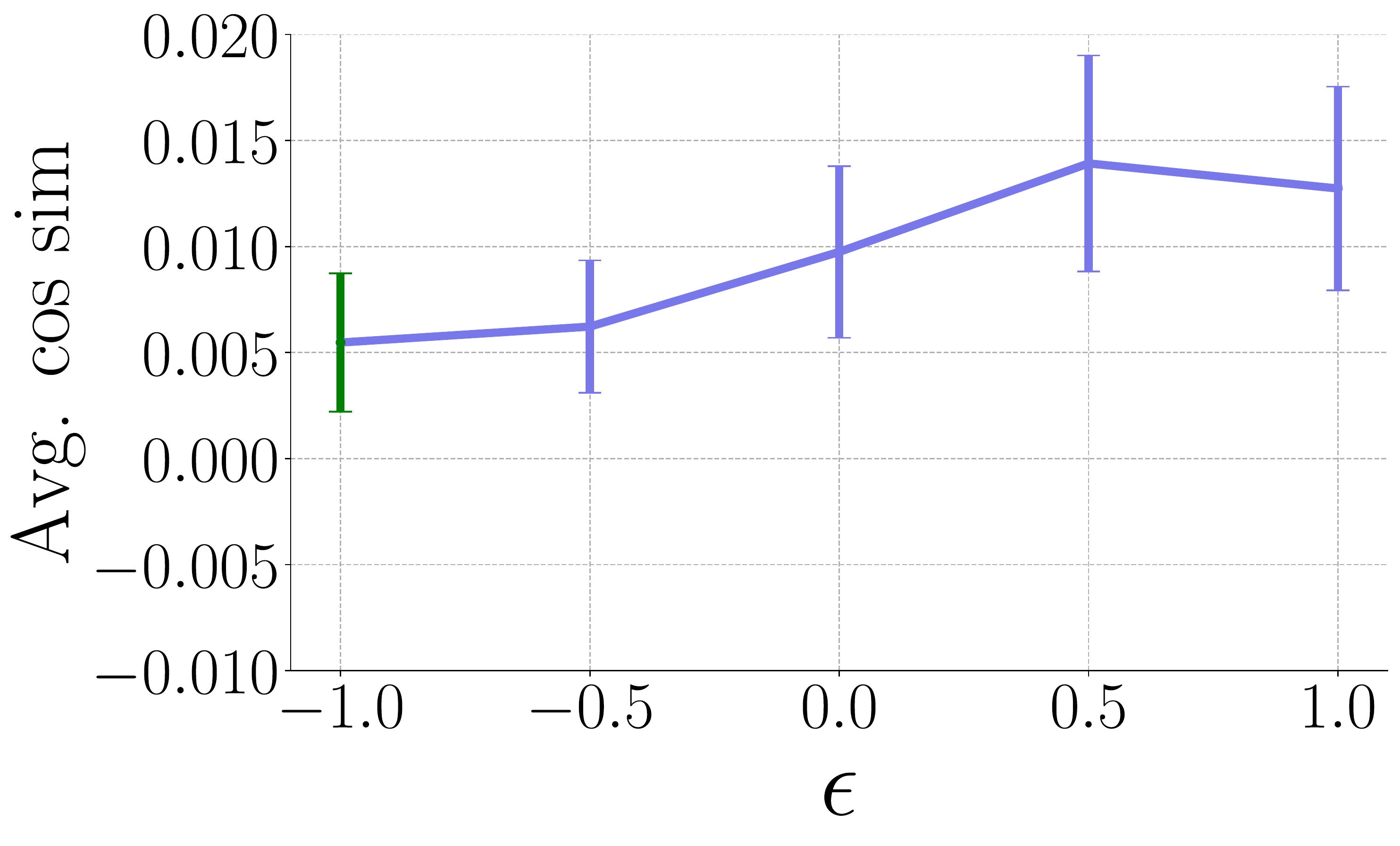}    \label{fig:abla_cos_th}}
	\caption{The effect of the number of references (a, b) and threshold $\epsilon$ (c, d) on NSS metric and average cosine similarity within setting $\langle \mathsf{S}, \mathsf{D}, \mathsf{W} \rangle$.
		$n=0$ indicates that no reference sample is used.
		Hence, $\mathsf{TR\!-\!Ref}$ and $\mathsf{FT|Ref}$ turn to be $\mathsf{TR}$ and $\mathsf{FT}$. 
		The results of c and d are generated with $n=1$.
		$\epsilon$ determines whether the gradient needs to be corrected or not (see \fig~\ref{fig:reference}).
		Comparing to $\mathsf{TR}$, $\mathsf{FT}$, and $\mathsf{FT|Ref}$, only $\mathsf{TR\!-\!Ref}$ is able to evaluate the cosine similarity between the samples from the source domain and target domain (see \fig~\ref{fig:framework})
	}
	\label{fig:eff_shots}
\end{figure}

\subsection{Ablation Study}

\subsubsection{Effect of Number of References.}

As shown in \fig~\ref{fig:eff_shots}, as the number of references increases, the performance of $\mathsf{TR\!-\!Ref}$ keeps flat or even slightly drops, but the performance of $\mathsf{FT|Ref}$ is significantly improved. This implies that the proposed reference process with more reference samples can yield better initialization for fine-tuning. Moreover, the average cosine similarity is increased with more references. This implies that the number of references is helpful to adapt the training process with source domain data to the target domain data.

\subsubsection{Effect of Threshold $\epsilon$.}

We experiment with the proposed framework with $n=1$, which is more representative and challenging than cases with more references, with various thresholds. An interesting observation in \fig~\ref{fig:abla_nss_th} is that although $\epsilon=1$ achieves best performance on $\mathsf{TR\!-\!Ref}$, it deteriorates the performance of $\mathsf{FT|Ref}$. This shows that when $\epsilon=1$, all the gradients at each iteration need to be corrected because the cosine similarity between any two gradients is equal or less than 1. As a result, the reference process enforces the training process to overfit the reference samples. This can be verified in \fig~\ref{fig:abla_cos_th} where the average cosine similarity is roughly increased as $\epsilon$ is increasing.

\begin{figure}[!t]
	\centering
	\subfloat[] {\includegraphics[width=.28\linewidth]{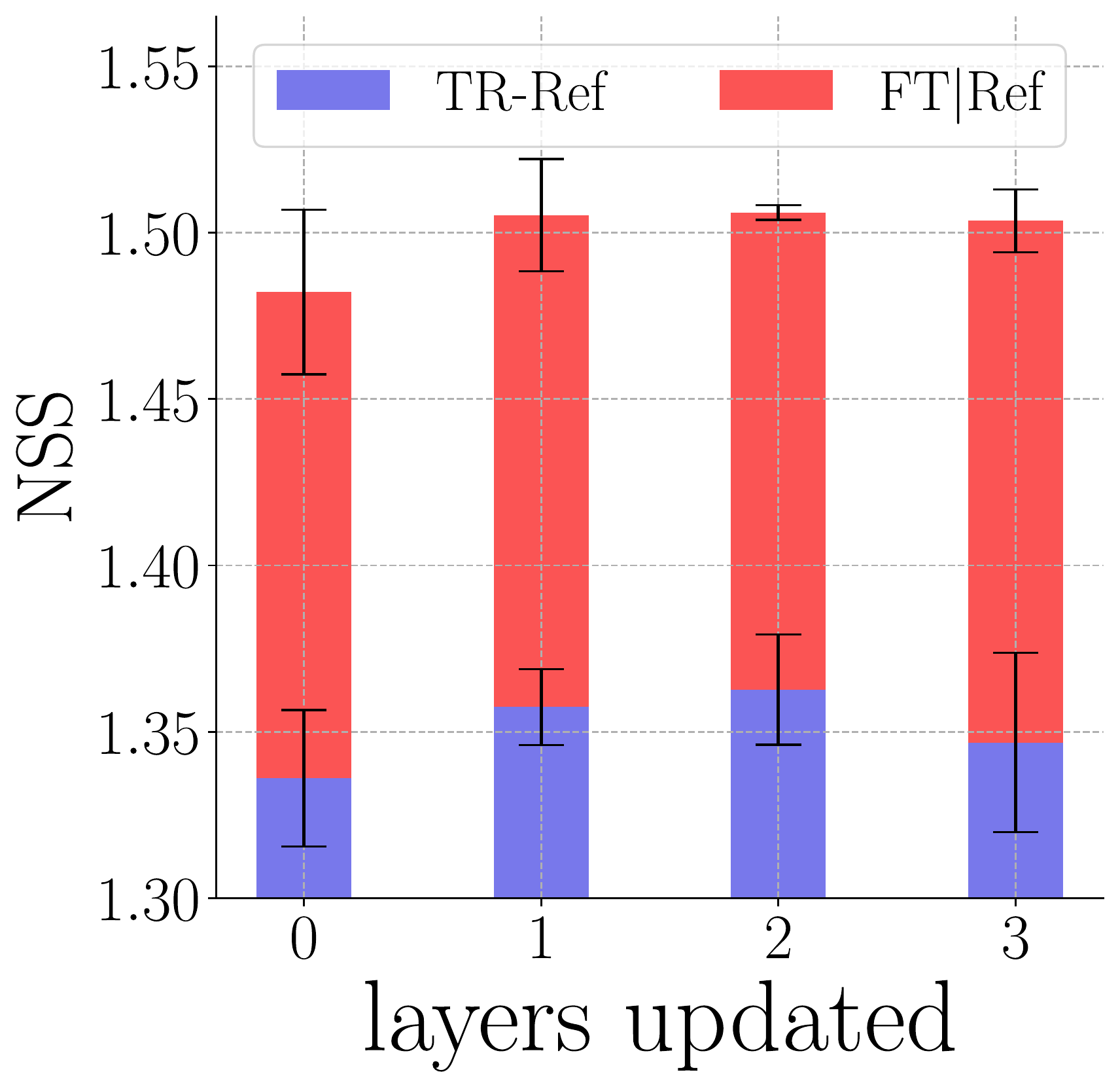}     \label{fig:abla_nss_layer}} \hfill
	\subfloat[] {\includegraphics[width=0.28\linewidth]{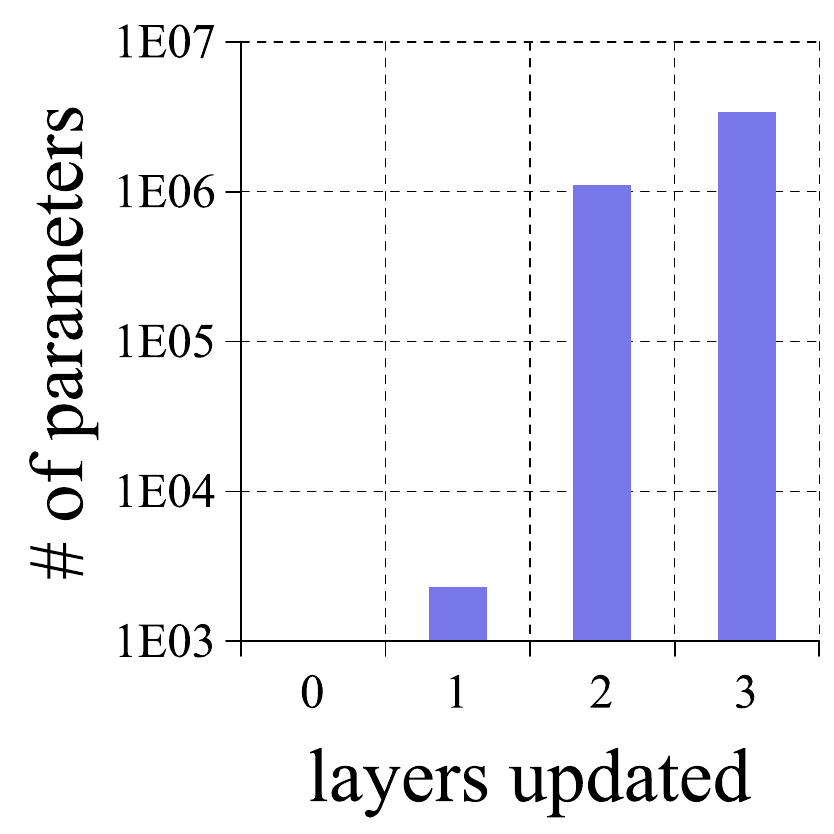} \label{fig:abla_parameters}} \hfill
	\subfloat[] {\includegraphics[width=0.28\linewidth]{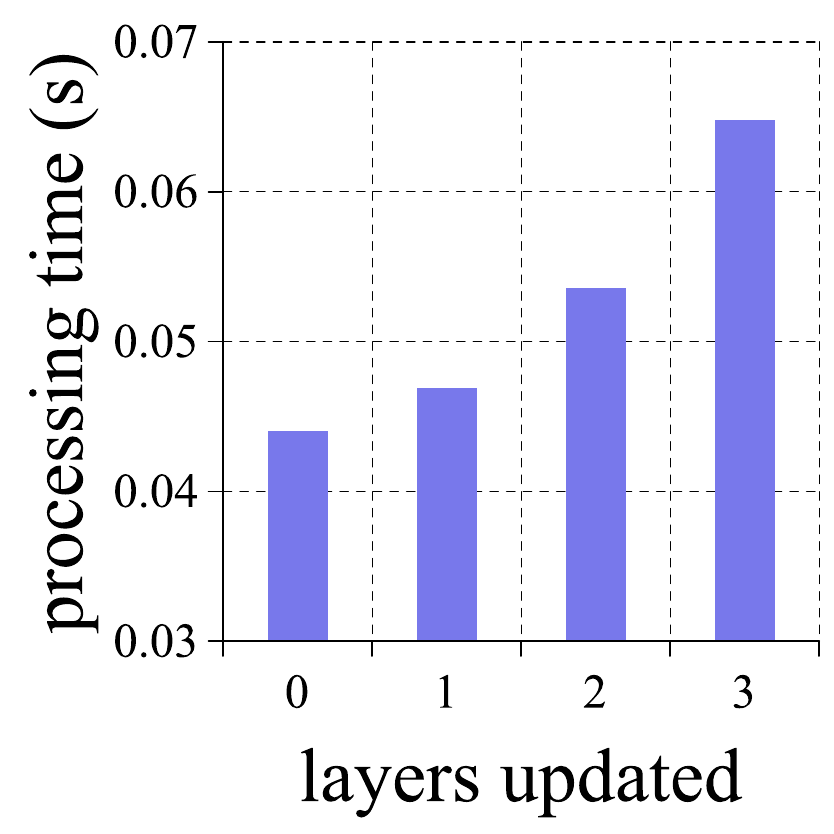}        \label{fig:abla_para_time}}
	\caption{Ablation study of downstream layers updated within setting $\langle \mathsf{S}, \mathsf{D}, \mathsf{W} \rangle$ with $n=1$. Note that when 0 layer is updated, it turns to be $\mathsf{TR}$ and $\mathsf{FT}$}
	\label{fig:para_time}
\end{figure}

\subsubsection{Effect of Updated Layers.}

To understand the effect of layers updated by the proposed $1$-reference transfer learning, we experiment with various downstream layers. Consequently, the performance is shown in \fig~\ref{fig:abla_nss_layer}, while the number of parameters and the computational cost are reported in \fig \ref{fig:abla_parameters} and \fig~\ref{fig:abla_para_time}, respectively. The layers are downstream layers, which are close to the output. When 0 layer is updated, $\mathsf{TR\!-\!Ref}$ and $\mathsf{FT|Ref}$ are equivalent to $\mathsf{TR}$ and $\mathsf{FT}$, respectively. The baseline model in this experiment is DINet.

\fig~\ref{fig:abla_nss_layer} shows that using the last 2 layers achieves slightly better performance in NSS than using the other numbers of the last layers. However, it takes 69 milliseconds longer in the training process than using the last layers. In light of the trade-off, we use the last layer of the baseline model in Section~\ref{sec:experiment}.

\begin{figure*}[!t]
	\centering
	\includegraphics[width=1.0\textwidth]{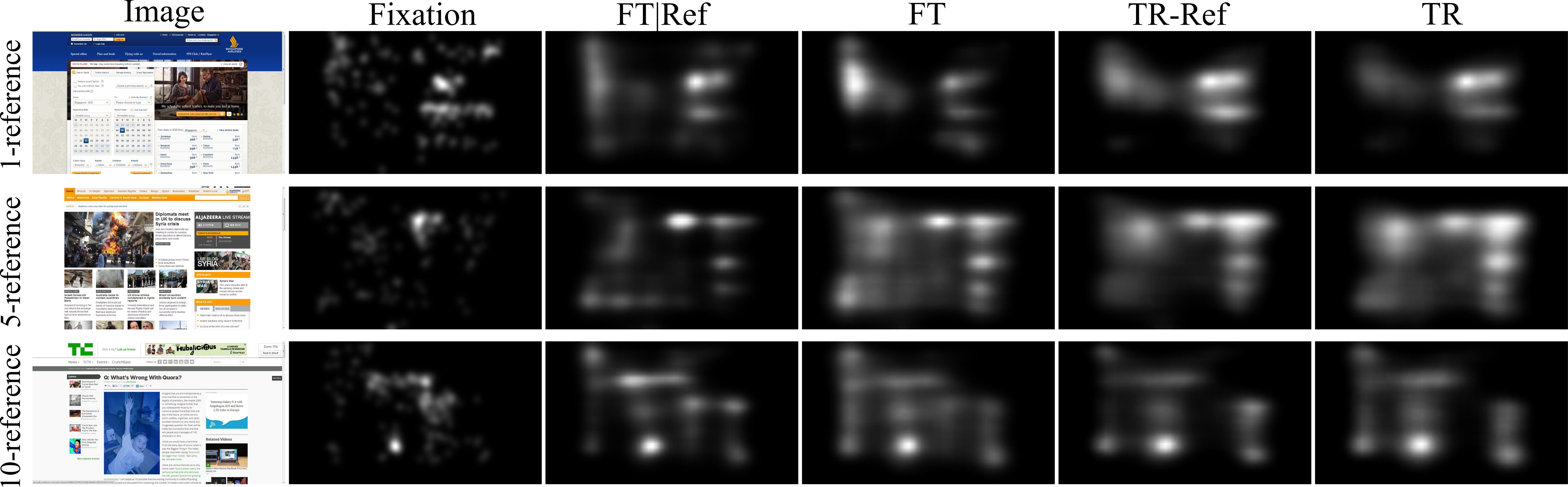} 
	\caption{Qualitative results with human fixations and maps generated by the models trained by the four procedures}
	\label{fig:qual}
\end{figure*}

\begin{figure*}[!t]
	\centering
	\includegraphics[width=1.0\textwidth]{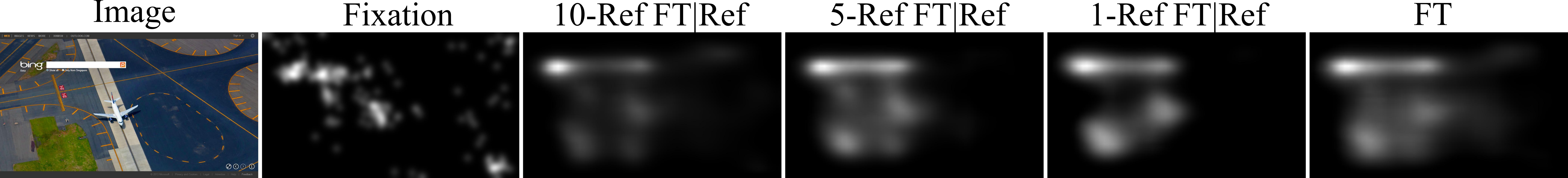} 
	\caption{Qualitative results w.r.t. different $n$}
	\label{fig:qual_n}
\end{figure*}

\subsection{Qualitative Comparison}

\fig~\ref{fig:qual} shows the comparison between the predicted saliency maps generated by $\mathsf{TR}$, $\mathsf{TR\!-\!Ref}$, $\mathsf{FT}$, and $\mathsf{FT|Ref}$. It can be observed that with the reference process, the proposed framework efficiently leverages the knowledge learned from the source domain, which are based on natural scene images, to subtly identify salience in the new domain. 
Taking the example in the first row, $\mathsf{FT|Ref}$ predicts that the people is salient, which takes the learned knowledge into account, whereas $\mathsf{FT}$ predicts that the people is less salient than the text.
\fig~\ref{fig:qual_n} shows more references lead to better prediction.

\section{Conclusion}

This work studies how to leverage the knowledge learned from a source domain that has adequate images and corresponding human fixations and very few samples (\ie~references) from a new domain (\ie~target domain) to predict saliency maps in the target domain. 
We propose an $n$-reference transfer learning framework to guide the training process to converge to a local minimum in favor of the target domain. 
The proposed framework is gradient-based and model-agnostic. 
Comprehensive experiments and ablation studies to evaluate the proposed framework are reported. 
Results show the effectiveness of the framework with a significant performance improvement.
\subsubsection{Acknowledgments.} This research was funded in part by the NSF under Grants 1908711, 1849107, in part by the University of Minnesota Department of Computer Science and Engineering Start-up Fund (QZ), and in part supported by the National Research Foundation, Singapore under its Strategic Capability Research Centres Funding Initiative. Any opinions, findings and conclusions or recommendations expressed in this material are those of the author(s) and do not reflect the views of National Research Foundation, Singapore.

\clearpage
%
%

\begin{thebibliography}{10}
\providecommand{\url}[1]{\texttt{#1}}
\providecommand{\urlprefix}{URL }
\providecommand{\doi}[1]{https://doi.org/#1}

\bibitem{Bauml_ICRA_2019}
B{\"a}uml, B., Tulbure, A.: Deep n-shot transfer learning for tactile material
  classification with a flexible pressure-sensitive skin. In: International
  Conference on Robotics and Automation. pp. 4262--4268 (2019)

\bibitem{Bengio_ICMLW_2012}
Bengio, Y.: Deep learning of representations for unsupervised and transfer
  learning. In: Proceedings of ICML workshop on unsupervised and transfer
  learning. pp. 17--36 (2012)

\bibitem{Borji_arXiv_2015}
Borji, A., Itti, L.: {CAT2000}: A large scale fixation dataset for boosting
  saliency research. In: CVPR 2015 workshop on ``Future of Datasets'' (2015)

\bibitem{Borji_TIP_2013}
Borji, A., Sihite, D.N., Itti, L.: Quantitative analysis of human-model
  agreement in visual saliency modeling: A comparative study. IEEE Transactions
  on Image Processing  \textbf{22}(1),  55--69 (2013)

\bibitem{Bylinskii_PAMI_2018}
Bylinskii, Z., Judd, T., Oliva, A., Torralba, A., Durand, F.: What do different
  evaluation metrics tell us about saliency models? IEEE Transactions on
  Pattern Analysis and Machine Intelligence  \textbf{41}(3),  740--757 (2018)

\bibitem{Cornia_ICPR_2016}
{Cornia}, M., {Baraldi}, L., {Serra}, G., {Cucchiara}, R.: A deep multi-level
  network for saliency prediction. In: International Conference on Pattern
  Recognition. pp. 3488--3493 (2016)

\bibitem{Cornia_TIP_2018}
Cornia, M., Baraldi, L., Serra, G., Cucchiara, R.: Predicting human eye
  fixations via an lstm-based saliency attentive model. IEEE Transactions on
  Image Processing  \textbf{27}(10),  5142--5154 (Oct 2018)

\bibitem{Csurka_Springer_2017}
Csurka, G.: A comprehensive survey on domain adaptation for visual
  applications. In: Domain Adaptation in Computer Vision Applications, pp.
  1--35. Advances in Computer Vision and Pattern Recognition, Springer (2017)

\bibitem{Daume_JAIR_2006}
Daume~III, H., Marcu, D.: Domain adaptation for statistical classifiers.
  Journal of Artificial Intelligence Research  \textbf{26},  101--126 (2006)

\bibitem{Deng_CVPR_2009}
Deng, J., Dong, W., Socher, R., Li, L.J., Li, K., Fei-Fei, L.: {ImageNet: A
  Large-Scale Hierarchical Image Database}. In: IEEE Conference on Computer
  Vision and Pattern Recognition. pp. 248--255 (2009)

\bibitem{Li_PAMI_2006}
Fei-Fei, L., Fergus, R., Perona, P.: One-shot learning of object categories.
  IEEE Transactions on Pattern Analysis and Machine Intelligence
  \textbf{28}(4),  594--611 (2006)

\bibitem{Ge_CVPR_2017}
Ge, W., Yu, Y.: Borrowing treasures from the wealthy: Deep transfer learning
  through selective joint fine-tuning. In: {IEEE} Conference on Computer Vision
  and Pattern Recognition. pp. 1086--1095 (2017)

\bibitem{Guo_CVPR_2019}
Guo, Y., Shi, H., Kumar, A., Grauman, K., Rosing, T., Feris, R.: {SpotTune}:
  Transfer learning through adaptive fine-tuning. In: {IEEE} Conference on
  Computer Vision and Pattern Recognition. pp. 4805--4814 (2019)

\bibitem{Harel_NIPS_2007}
Harel, J., Koch, C., Perona, P.: Graph-based visual saliency. In: Advances in
  Neural Information Processing Systems. pp. 545--552 (2007)

\bibitem{He_CVPR_2016}
He, K., Zhang, X., Ren, S., Sun, J.: Deep residual learning for image
  recognition. In: {IEEE} Conference on Computer Vision and Pattern
  Recognition. pp. 770--778 (2016)

\bibitem{Hou_PAMI_2011}
Hou, X., Harel, J., Koch, C.: Image signature: Highlighting sparse salient
  regions. IEEE Transactions on Pattern Analysis and Machine Intelligence
  \textbf{34}(1),  194--201 (2011)

\bibitem{Huang_CVPR_2017}
Huang, G., Liu, Z., van~der Maaten, L., Weinberger, K.Q.: Densely connected
  convolutional networks. In: IEEE Conference on Computer Vision and Pattern
  Recognition. pp. 4700--4708 (2017)

\bibitem{Itti_ASNN_2003}
Itti, L., Dhavale, N., Pighin, F.: Realistic {Avatar} {Eye} and {Head}
  {Animation} {Using} a {Neurobiological} {Model} of {Visual} {Attention}. In:
  Proceedings of {SPIE} 48th {Annual} {International} {Symposium} on {Optical}
  {Science} and {Technology} (Aug 2003)

\bibitem{Itti_PAMI_1998}
Itti, L., Koch, C., Niebur, E.: A model of saliency-based visual attention for
  rapid scene analysis. IEEE Transactions on Pattern Analysis and Machine
  Intelligence  \textbf{20}(11),  1254--1259 (1998)

\bibitem{Jiang_CVPR_2015}
Jiang, M., Huang, S., Duan, J., Zhao, Q.: {SALICON}: Saliency in context. In:
  {IEEE} Conference on Computer Vision and Pattern Recognition. pp. 1072--1080
  (2015)

\bibitem{Judd_Report_2012}
Judd, T., Durand, F., Torralba, A.: A benchmark of computational models of
  saliency to predict human fixations. In: MIT Technical Report (2012)

\bibitem{Judd_ICCV_2009}
Judd, T., Ehinger, K., Durand, F., Torralba, A.: Learning to predict where
  humans look. In: IEEE International Conference on Computer Vision. pp.
  2106--2113 (2009)

\bibitem{Kingma_arXiv_2014}
Kingma, D.P., Ba, J.: Adam: A method for stochastic optimization. In:
  International Conference on Learning Representations (2015)

\bibitem{Krizhevsky_NIPS_2012}
Krizhevsky, A., Sutskever, I., Hinton, G.E.: {ImageNet} classification with
  deep convolutional neural networks. In: Advances in Neural Information
  Processing Systems. pp. 1097--1105 (2012)

\bibitem{Kruthiventi_TIP_2017}
Kruthiventi, S.S., Ayush, K., Babu, R.V.: {DeepFix}: A fully convolutional
  neural network for predicting human eye fixations. IEEE Transactions on Image
  Processing  \textbf{26}(9),  4446--4456 (2017)

\bibitem{Kummerer_ICLR_2015}
K{\"u}mmerer, M., Theis, L., Bethge, M.: Deep gaze i: Boosting saliency
  prediction with feature maps trained on imagenet. In: International
  Conference on Learning Representations (ICLR 2015). pp. 1--12 (2014)

\bibitem{Lake_AMCSS_2011}
Lake, B., Salakhutdinov, R., Gross, J., Tenenbaum, J.: One shot learning of
  simple visual concepts. In: Proceedings of the Annual Meeting of the
  Cognitive Science Society (2011)

\bibitem{Lake_Science_2015}
Lake, B.M., Salakhutdinov, R., Tenenbaum, J.B.: Human-level concept learning
  through probabilistic program induction. Science  \textbf{350}(6266),
  1332--1338 (2015)

\bibitem{Lee_CVPR_2019}
Lee, K., Maji, S., Ravichandran, A., Soatto, S.: Meta-learning with
  differentiable convex optimization. In: IEEE Conference on Computer Vision
  and Pattern Recognition. pp. 10657--10665 (2019)

\bibitem{Li_ACMMM_2017}
Li, J., Wong, Y., Zhao, Q., Kankanhalli, M.S.: Attention transfer from web
  images for video recognition. In: {ACM} Multimedia. pp.~1--9 (2017)

\bibitem{Li_WACV_2020}
Li, J., Xu, Z., Wong, Y., Zhao, Q., Kankanhalli, M.S.: {GradMix}: Multi-source
  transfer across domains and tasks. In: {IEEE} Winter Conference on
  Applications of Computer Vision (2020)

\bibitem{Li_CVPR_2019}
Li, W., Wang, L., Xu, J., Huo, J., Gao, Y., Luo, J.: Revisiting local
  descriptor based image-to-class measure for few-shot learning. In: {IEEE}
  Conference on Computer Vision and Pattern Recognition. pp. 7260--7268 (2019)

\bibitem{Luo_TPAMI_2019}
Luo, Y., Wong, Y., Kankanhalli, M., Zhao, Q.: Direction concentration learning:
  Enhancing congruency in machine learning. {IEEE} Transactions on Pattern
  Analysis and Machine Intelligence  (2019)

\bibitem{Mohri_MIT_2012}
Mohri, M., Rostamizadeh, A., Talwalkar, A.: Foundations of Machine Learning.
  The MIT Press (2012)

\bibitem{Neider_VR_2006}
Neider, M.B., Zelinsky, G.J.: Scene context guides eye movements during visual
  search. Vision Research  \textbf{46}(5),  614--621 (2006)

\bibitem{Ouerhani_ELCVIAs_2004}
Ouerhani, N., Von~Wartburg, R., Hugli, H., M{\"u}ri, R.: Empirical validation
  of the saliency-based model of visual attention. Electronic Letters on
  Computer Vision and Image Analysis  \textbf{3}(1),  13--24 (2004)

\bibitem{Pan_arXiv_2017}
Pan, J., Ferrer, C.C., McGuinness, K., O'Connor, N.E., Torres, J., Sayrol, E.,
  Giro-i Nieto, X.: {SalGAN}: Visual saliency prediction with generative
  adversarial networks. arXiv preprint arXiv:1701.01081  (2017)

\bibitem{Pan_TNN_2010}
Pan, S.J., Tsang, I.W., Kwok, J.T., Yang, Q.: Domain adaptation via transfer
  component analysis. IEEE Transactions on Neural Networks  \textbf{22}(2),
  199--210 (2010)

\bibitem{Paszke_NIPSW_2017}
Paszke, A., Gross, S., Chintala, S., Chanan, G., Yang, E., DeVito, Z., Lin, Z.,
  Desmaison, A., Antiga, L., Lerer, A.: Automatic differentiation in {PyTorch}.
  In: NIPS Autodiff Workshop (2017)

\bibitem{Rothenstein_IVC_2008}
Rothenstein, A.L., Tsotsos, J.K.: Attention links sensing to recognition. Image
  and Vision Computing  \textbf{26}(1),  114--126 (2008)

\bibitem{Shan_ICISBDE_2017}
Shan, W., Sun, G., Zhou, X., Liu, Z.: Two-stage transfer learning of end-to-end
  convolutional neural networks for webpage saliency prediction. In:
  International Conference on Intelligent Science and Big Data Engineering. pp.
  316--324 (2017)

\bibitem{Shen_ECCV_2014}
Shen, C., Zhao, Q.: Webpage saliency. In: European Conference on Computer
  vision. Lecture Notes in Computer Science, vol.~8695, pp. 33--46 (2014)

\bibitem{Shrivastava_ToG_2011}
Shrivastava, A., Malisiewicz, T., Gupta, A., Efros, A.A.: Data-driven visual
  similarity for cross-domain image matching. {ACM} Transactions on Graphics
  \textbf{30}(6), ~154 (2011)

\bibitem{Sung_CVPR_2018}
Sung, F., Yang, Y., Zhang, L., Xiang, T., Torr, P.H., Hospedales, T.M.:
  Learning to compare: Relation network for few-shot learning. In: IEEE
  Conference on Computer Vision and Pattern Recognition. pp. 1199--1208 (2018)

\bibitem{Torralba_PR_2006}
Torralba, A., Oliva, A., Castelhano, M.S., Henderson, J.M.: Contextual guidance
  of eye movements and attention in real-world scenes: the role of global
  features in object search. Psychological review  \textbf{113}(4), ~766 (2006)

\bibitem{Vinyals_NIPS_2016}
Vinyals, O., Blundell, C., Lillicrap, T., kavukcuoglu, k., Wierstra, D.:
  Matching networks for one shot learning. In: Advances in Neural Information
  Processing Systems. pp. 3630--3638 (2016)

\bibitem{Wolfe_NHB_2017}
Wolfe, J.M., Horowitz, T.S.: Five factors that guide attention in visual
  search. Nature Human Behaviour  \textbf{1}(3), ~0058 (2017)

\bibitem{Xie_CVPR_2017}
Xie, S., Girshick, R., Doll{\'a}r, P., Tu, Z., He, K.: Aggregated residual
  transformations for deep neural networks. In: IEEE Conference on Computer
  Vision and Pattern Recognition. pp. 5987--5995 (2017)

\bibitem{Li_ICML_2018}
Xuhong, L., Grandvalet, Y., Davoine, F.: Explicit inductive bias for transfer
  learning with convolutional networks. In: International Conference on Machine
  Learning. pp. 2830--2839 (2018)

\bibitem{Yang_arXiv_2019}
{Yang}, S., {Lin}, G., {Jiang}, Q., {Lin}, W.: A dilated inception network for
  visual saliency prediction. IEEE Transactions on Multimedia  (2019)

\bibitem{Yang_TMM_2012}
Yang, Y., Ma, Z., Hauptmann, A.G., Sebe, N.: Feature selection for multimedia
  analysis by sharing information among multiple tasks. IEEE Transactions on
  Multimedia  \textbf{15}(3),  661--669 (2012)

\bibitem{Zhang_CVPR_2013}
Zhang, J., Sclaroff, S.: Saliency detection: A boolean map approach. In: IEEE
  International Conference on Computer Vision. pp. 153--160 (2013)

\end{thebibliography}

\end{document}


\pagestyle{headings}
\mainmatter
\def\ECCVSubNumber{615}  

\title{Supplementary File for $n$-Reference Transfer Learning for Saliency Prediction} 

\titlerunning{$n$-Reference Transfer Learning for Saliency Prediction}
%
\author{Yan~Luo\inst{1}\orcidID{0000-0001-5135-0316} \and
Yongkang~Wong\inst{2}\orcidID{0000-0002-1239-4428} \and
Mohan~S.~Kankanhalli\inst{2}\orcidID{0000-0002-4846-2015} \and
Qi~Zhao\inst{1}\orcidID{0000-0003-3054-8934}}
%
\authorrunning{Y. Luo et al.}
%
\institute{Department of Computer Science and Engineering, University of Minnesota \\
\email{luoxx648@umn.edu}, \email{qzhao@cs.umn.edu}\\
\and
School of Computing, National University of Singapore \\
\email{\{wongyk, mohan\}@comp.nus.edu.sg}}

\maketitle
%
In this supplementary document, we first present the proof of Theorem 1. Then, we provide both the mean and standard deviation of scores on NSS, AUC, and CC metrics within the four settings, \ie $\langle$SALICON, DINet, WebSal$\rangle$ (see \tab \ref{tbl:s_d_w_sup}), $\langle$SALICON, ResNet-50, WebSal$\rangle$ (see \tab \ref{tbl:s_r_w_sup}), $\langle$MIT1003, DINet, WebSal$\rangle$ (see \tab \ref{tbl:m_d_w_sup}), and $\langle$SALICON, DINet, Art$\rangle$ (see \tab \ref{tbl:s_d_a_sup}). Overall, the standard deviations generated by the proposed $\mathsf{TR\!-\!Ref}$ and $\mathsf{FT|Ref}$ are similar to the ones generated by $\mathsf{TR}$ and $\mathsf{FT}$, respectively.

\begin{theorem}[Saliency generalization bound]
	Denote $H$ as a finite hypothesis set. Given $\ell^{p}$ and $y\in [0,1]^{m}$, for any $\delta>0$, with probability at least $1-\delta$, the following inequality holds for all $f\in H$:
	\begin{align*}
	|R_{\mathcal{D}}(f) - \hat{R}_{\mathcal{D}}(f)| \le  m^{\frac{1}{p}}\sqrt{\frac{\log|H|+\log\frac{2}{\delta}}{2|D|}}
	\end{align*}
\end{theorem}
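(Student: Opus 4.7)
The bound has the flavour of a standard Hoeffding plus union-bound argument, with the sole saliency-specific ingredient being the fact that the per-sample $\ell^{p}$ loss is bounded in terms of the output dimension $m$. The plan is therefore to (i)~bound the loss range, (ii)~apply Hoeffding's inequality to a fixed hypothesis, (iii)~take a union bound over the finite $H$, and (iv)~invert the resulting tail in $\delta$ to read off $\varepsilon$.

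\noindent\textbf{Step 1: bounding the loss.} I would first observe that, since the saliency output and the ground truth both lie in $[0,1]^{m}$, for any $f\in H$ and any input $x$ the coordinate-wise difference $f(x)-y$ lies in $[-1,1]^{m}$. Consequently
\begin{equation*}
\ell^{p}\bigl(f(x),y\bigr)=\|f(x)-y\|_{p}=\Bigl(\sum_{i=1}^{m}|f(x)_{i}-y_{i}|^{p}\Bigr)^{1/p}\le m^{1/p},
\end{equation*}
and of course $\ell^{p}\ge 0$. Thus the random variable $\ell^{p}(f(X),Y)$ takes values in the interval $[0,m^{1/p}]$ of length $m^{1/p}$.

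\noindent\textbf{Step 2--3: Hoeffding and union bound.} For a fixed $f\in H$, Hoeffding's inequality applied to the i.i.d. sample $D$ gives
\begin{equation*}
\Pr\bigl(|R_{\mathcal{D}}(f)-\hat R_{\mathcal{D}}(f)|>\varepsilon\bigr)\le 2\exp\!\Bigl(-\tfrac{2|D|\,\varepsilon^{2}}{m^{2/p}}\Bigr).
\end{equation*}
Taking a union bound over the at most $|H|$ elements of the hypothesis set yields
\begin{equation*}
\Pr\Bigl(\exists f\in H:\;|R_{\mathcal{D}}(f)-\hat R_{\mathcal{D}}(f)|>\varepsilon\Bigr)\le 2|H|\exp\!\Bigl(-\tfrac{2|D|\,\varepsilon^{2}}{m^{2/p}}\Bigr).
\end{equation*}

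\noindent\textbf{Step 4: solving for $\varepsilon$.} Setting the right-hand side equal to $\delta$ and solving gives $\varepsilon=m^{1/p}\sqrt{(\log|H|+\log(2/\delta))/(2|D|)}$, which is exactly the claimed bound. I expect the only non-routine step to be Step~1: one must argue cleanly that the $\ell^{p}$ loss is bounded by $m^{1/p}$ using the fact that both predictions and labels are normalised saliency maps in $[0,1]^{m}$; the rest is textbook Hoeffding plus a union bound over a finite $H$.
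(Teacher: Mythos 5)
Your proposal is correct and follows essentially the same route as the paper's proof: bound the $\ell^{p}$ loss by $m^{1/p}$ using $y\in[0,1]^{m}$, apply Hoeffding's inequality to each fixed $f$, union-bound over the finite $H$, and set the resulting tail equal to $\delta$ to solve for the deviation. Your Step 1 is in fact slightly more explicit than the paper's one-line justification of the bound on the loss.
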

\begin{proof}[]
	The proof sketch is similar to the regression generalization bound provided in \cite{Mohri_MIT_2012}.
	First, as $\ell^{p}(y_{1},y_{2})=(\sum_{i}^{m}|y_{1i}-y_{2i}|^p)^{\frac{1}{p}}\le m^{\frac{1}{p}}$, we know $\ell^{p}$ is bounded by $m^{\frac{1}{p}}$. Then, by the union bound, given an error $\xi$, we have
	\begin{align*}
	Pr[\sup_{f\in H}|R(f)-\hat{R}(f)| > \xi] \le \sum_{f\in H}^{} Pr[|R(f)-\hat{R}(f)|> \xi].
	\end{align*}
	By Hoeffding's bound, we have
	\begin{align*}
	\sum_{f\in H}^{} Pr[|R(f)-\hat{R}(f)|> \xi] \le 2|H|\exp (-\frac{2|D|\xi^2}{m^{\frac{2}{p}}}).
	\end{align*}
	Due to the probability definition, $2|H|\exp (-\frac{2|D|\xi^2}{m^{\frac{2}{p}}}) = \delta$. Considering $\xi$ is a function of other variables, we can rearrange it as 
	$\xi=m^{\frac{1}{p}}\sqrt{\frac{\log|H|+\log\frac{2}{\delta}}{2|D|}}$.
	Since we know $Pr[|R(f)-\hat{R}(f)| > \xi]$ is with probability at most $\delta$, it can be inferred that $Pr[|R(f)-\hat{R}(f)| <= \xi]$ is at least $1-\delta$.
	\qed
\end{proof}

\newpage

%
%
%
%
%
%
\begin{table}[!t]
	\centering
	\caption{Performance within setting $\langle$SALICON, DINet, WebSal$\rangle$. $\uparrow$ implies that higher score is better. The score in bold font indicates the best result under the metric. We take 10 runs for conventional training, 1-shot, 5-shot, and 10-shot transfer learning and report the mean and the std. Empirical upper bound (EUB) is generated by the 3-fold cross validation on the target domain dataset. The details of the experimental setup are provided in Section~\ref{subsec:setup}}
	\footnotesize
	\begin{tabular}{lcccc}
		\toprule
		& & NSS$\uparrow$ & AUC$\uparrow$ & CC$\uparrow$ \\
		\cmidrule(lr){3-3} \cmidrule(lr){4-4} \cmidrule(lr){5-5}
		$\mathsf{TR}$ & $n=0$ & ~~1.3330$\pm$0.0084~~ & ~~0.7796$\pm$0.0025~~ & ~~0.5515$\pm$0.0033~~  \\ \midrule
		$\mathsf{TR\!-\!Ref}$~~~~~~~ & $n=1$ & 1.3621$\pm$0.0191 & 0.7848$\pm$0.0023 & 0.5628$\pm$0.0073 \\ 
		$\mathsf{FT}$ & $n=1$ & 1.4731$\pm$0.0466 & 0.8005$\pm$0.0120 & 0.5976$\pm$0.0175 \\
		$\mathsf{FT|Ref}$ & $n=1$ & \textbf{1.5077}$\pm$0.0497 & \textbf{0.8051}$\pm$0.0121 & \textbf{0.6121}$\pm$0.0171 \\ \midrule
		
		$\mathsf{TR\!-\!Ref}$ & $n=5$ & 1.3683$\pm$0.0266 & 0.7874$\pm$0.0083 & 0.5659$\pm$0.0124  \\
		$\mathsf{FT}$ & $n=5$ & 1.5803$\pm$0.0346 & 0.8161$\pm$0.0062 & 0.6355$\pm$0.0113  \\
		$\mathsf{FT|Ref}$ & $n=5$ & \textbf{1.6085}$\pm$0.0212 & \textbf{0.8200}$\pm$0.0050 & \textbf{0.6468}$\pm$0.0084 \\ \midrule
		
		$\mathsf{TR\!-\!Ref}$ & $n=10$ & 1.3647$\pm$0.0150 & 0.7839$\pm$0.0046 & 0.5633$\pm$0.0065  \\ 
		$\mathsf{FT}$ & $n=10$ & 1.6290$\pm$0.0214 & 0.8247$\pm$0.0048 & 0.6531$\pm$0.0085  \\
		$\mathsf{FT|Ref}$ & $n=10$ & \textbf{1.6439}$\pm$0.0249 & \textbf{0.8276}$\pm$0.0056 & \textbf{0.6605}$\pm$0.0095 \\ \midrule
		
		$\mathsf{TR\!-\!Ref}$ & EUB & 1.3822$\pm$0.0413 & 0.7910$\pm$0.0159 & 0.5708$\pm$0.0229 \\
		$\mathsf{FT}$ & EUB & 1.8695$\pm$0.0268 & 0.8488$\pm$0.0051 & 0.7389$\pm$0.0047 \\
		$\mathsf{FT|Ref}$ & EUB & \textbf{1.8831}$\pm$0.0189 & \textbf{0.8494}$\pm$0.0046 & \textbf{0.7442}$\pm$0.0058 \\
		\bottomrule	
	\end{tabular}
	\label{tbl:s_d_w_sup}
\end{table}

%
%
%
%
\begin{table}[!t]
	\centering
	\caption{Performance within setting $\langle$SALICON, ResNet-50, WebSal$\rangle$}
	\footnotesize
	\begin{tabular}{lcccc}
		\toprule
		& & NSS$\uparrow$ & AUC$\uparrow$ & CC$\uparrow$ \\
		\cmidrule(lr){3-3} \cmidrule(lr){4-4} \cmidrule(lr){5-5}
		$\mathsf{TR}$ & $n=0$ & ~~1.2950$\pm$0.0604~~ & ~~0.7749$\pm$0.0120~~ & ~~0.5358$\pm$0.0226~~  \\ \midrule
		
		$\mathsf{TR\!-\!Ref}$~~~~~~~ & $n=1$ & 1.3569$\pm$0.0236 & 0.7864$\pm$0.0055 & 0.5611$\pm$0.0099 \\
		$\mathsf{FT}$ & $n=1$ & 1.3722$\pm$0.0611 & 0.7923$\pm$0.0169 & 0.5627$\pm$0.0254 \\
		$\mathsf{FT|Ref}$ & $n=1$ & \textbf{1.4272}$\pm$0.0565 & \textbf{0.7983}$\pm$0.0133 & \textbf{0.5817}$\pm$0.0224 \\ \midrule
		
		$\mathsf{TR\!-\!Ref}$ & $n=5$ & 1.3535$\pm$0.0159 & 0.7837$\pm$0.0080 & 0.5593$\pm$0.0078  \\
		$\mathsf{FT}$ & $n=5$ & 1.5043$\pm$0.0165 & 0.8131$\pm$0.0051 & 0.6139$\pm$0.0064  \\
		$\mathsf{FT|Ref}$ & $n=5$ & \textbf{1.5491}$\pm$0.0242 & \textbf{0.8149}$\pm$0.0083 & \textbf{0.6281}$\pm$0.0092 \\ \midrule
		
		$\mathsf{TR\!-\!Ref}$ & $n=10$ & 1.3583$\pm$0.0242 & 0.7857$\pm$0.0056 & 0.5612$\pm$0.0103  \\
		$\mathsf{FT}$ & $n=10$ & 1.5164$\pm$0.0224 & 0.8103$\pm$0.0051 & 0.6200$\pm$0.0078  \\
		$\mathsf{FT|Ref}$ & $n=10$ & \textbf{1.5829}$\pm$0.0282 & \textbf{0.8143}$\pm$0.0082 & \textbf{0.6414}$\pm$0.0100 \\ \midrule
		
		$\mathsf{TR\!-\!Ref}$ & EUB & 1.3626$\pm$0.0082 & 0.7864$\pm$0.0110 & 0.5645$\pm$0.0031 \\
		$\mathsf{FT}$ & EUB & 1.8325$\pm$0.0414 & 0.8462$\pm$0.0016 & 0.7275$\pm$0.0066 \\
		$\mathsf{FT|Ref}$ & EUB & \textbf{1.8500}$\pm$0.0391 & \textbf{0.8480}$\pm$0.0016 & \textbf{0.7321}$\pm$0.0049 \\
		\bottomrule	
	\end{tabular}
	\label{tbl:s_r_w_sup}
\end{table}

%
%
%
%
%
\begin{table}[!t]
	\centering
	\caption{Performance within setting $\langle$MIT1003, DINet, WebSal$\rangle$}
	\footnotesize
	\begin{tabular}{lcccc}
		\toprule
		
		& & NSS$\uparrow$ & AUC$\uparrow$ & CC$\uparrow$ \\
		\cmidrule(lr){3-3} \cmidrule(lr){4-4} \cmidrule(lr){5-5}
		$\mathsf{TR}$ & $n=0$ & ~~1.3905$\pm$0.0062~~ & ~~0.7991$\pm$0.0010~~ & ~~0.5700$\pm$0.0021~~ \\ \midrule
		
		$\mathsf{TR\!-\!Ref}$ & $n=1$ & 1.4405$\pm$0.0175 & 0.8085$\pm$0.0026 & 0.5902$\pm$0.0071 \\
		$\mathsf{FT}$ & $n=1$ & 1.4410$\pm$0.0941 & 0.8023$\pm$0.0148 & 0.5784$\pm$0.0397 \\
		$\mathsf{FT|Ref}$ & $n=1$ & \textbf{1.4575}$\pm$0.0884 & \textbf{0.8070}$\pm$0.0140 & \textbf{0.5838}$\pm$0.0369 \\ \midrule
		
		$\mathsf{TR\!-\!Ref}$~~~~~~~ & $n=5$ & 1.4452$\pm$0.0156 & 0.8064$\pm$0.0025 & 0.5908$\pm$0.0055  \\
		$\mathsf{FT}$ & $n=5$ & 1.5795$\pm$0.0322 & 0.8217$\pm$0.0058 & 0.6395$\pm$0.0134  \\
		$\mathsf{FT|Ref}$ & $n=5$ & \textbf{1.6136}$\pm$0.0304 & \textbf{0.8269}$\pm$0.0045 & \textbf{0.6515}$\pm$0.0116 \\ \midrule
		
		$\mathsf{TR\!-\!Ref}$ & $n=10$ & 1.4330$\pm$0.0165 & 0.8060$\pm$0.0033 & 0.5872$\pm$0.0055  \\
		$\mathsf{FT}$ & $n=10$ & 1.6462$\pm$0.0216 & 0.8261$\pm$0.0044 & 0.6660$\pm$0.0083  \\
		$\mathsf{FT|Ref}$ & $n=10$ & \textbf{1.6691}$\pm$0.0206 & \textbf{0.8283}$\pm$0.0039 & \textbf{0.6730}$\pm$0.0068 \\ \midrule
		
		$\mathsf{TR\!-\!Ref}$ & EUB & 1.4402$\pm$0.0410 & 0.8087$\pm$0.0035 & 0.5905$\pm$0.0108 \\
		$\mathsf{FT}$ & EUB & 1.8450$\pm$0.0430 & 0.8466$\pm$0.0036 & 0.7330$\pm$0.0086 \\
		$\mathsf{FT|Ref}$ & EUB & \textbf{1.8507}$\pm$0.0397 & \textbf{0.8478}$\pm$0.0038 & \textbf{0.7344}$\pm$0.0080 \\
		\bottomrule	
	\end{tabular}
	\label{tbl:m_d_w_sup}
\end{table}

\begin{table}[!t]
	\centering
	\caption{Performance within setting $\langle$SALICON, DINet, Art$\rangle$}
	\footnotesize
	\begin{tabular}{lcccc}
		\toprule
		
		& & NSS$\uparrow$ & AUC$\uparrow$ & CC$\uparrow$ \\
		\cmidrule(lr){3-3} \cmidrule(lr){4-4} \cmidrule(lr){5-5} 
		$\mathsf{TR}$ & $n=0$ & ~~1.5172$\pm$0.0525~~ & ~~0.8225$\pm$0.0063~~ & ~~0.6003$\pm$0.0184~~ \\ \midrule
		
		$\mathsf{TR\!-\!Ref}$~~~~~~~ & $n=1$ & 1.5651$\pm$0.0299 & 0.8287$\pm$0.0045 & 0.6211$\pm$0.0105 \\
		$\mathsf{FT}$ & $n=1$ & 1.6255$\pm$0.0365 & 0.8324$\pm$0.0095 & 0.6449$\pm$0.0153 \\
		$\mathsf{FT|Ref}$ & $n=1$ & \textbf{1.6523}$\pm$0.0397 & \textbf{0.8380}$\pm$0.0088 & \textbf{0.6564}$\pm$0.0150 \\ \midrule
		
		$\mathsf{TR\!-\!Ref}$ & $n=5$ & 1.5870$\pm$0.0238 & 0.8304$\pm$0.0029 & 0.6274$\pm$0.0093  \\
		$\mathsf{FT}$ & $n=5$ & 1.8049$\pm$0.0317 & 0.8480$\pm$0.0056 & 0.7185$\pm$0.0161  \\
		$\mathsf{FT|Ref}$ & $n=5$ & \textbf{1.8314}$\pm$0.0341 & \textbf{0.8503}$\pm$0.0051 & \textbf{0.7274}$\pm$0.0152 \\ \midrule
		
		$\mathsf{TR\!-\!Ref}$ & $n=10$ & 1.5704$\pm$0.0377 & 0.8288$\pm$0.0036 & 0.6204$\pm$0.0139  \\
		$\mathsf{FT}$ & $n=10$ & 1.8325$\pm$0.0819 & 0.8474$\pm$0.0050 & 0.7288$\pm$0.0301  \\
		$\mathsf{FT|Ref}$ & $n=10$ & \textbf{1.8584}$\pm$0.0949 & \textbf{0.8503}$\pm$0.0043 & \textbf{0.7366}$\pm$0.0341 \\ \midrule
		
		$\mathsf{TR\!-\!Ref}$ & EUB & 1.5980$\pm$0.0347 & 0.8340$\pm$0.0043 & 0.6331$\pm$0.0152 \\
		$\mathsf{FT}$ & EUB & 2.1595$\pm$0.0764 & 0.8636$\pm$0.0063 & 0.8464$\pm$0.0054 \\
		$\mathsf{FT|Ref}$ & EUB & \textbf{2.1874}$\pm$0.0795 & \textbf{0.8649}$\pm$0.0065 & \textbf{0.8519}$\pm$0.0110 \\
		\bottomrule	
	\end{tabular}
	\label{tbl:s_d_a_sup}
\end{table}
%
%
%
%
%

%
%
%
%
%

%
%
%
%
%

\newpage

\bibliographystyle{splncs04}
\bibliography{reference}